\newcommand{\tabref}[1]{TABLE~\ref{#1}}
\newenvironment{proof}[1][Proof]{%
	\par\noindent\hspace{1em}\textit{#1.} \ignorespaces
}{%
	\par\noindent\hfill$\square$\par\vspace{0.5em}
}
\newtheorem{theorem}{Theorem}
\newtheorem{definition}{Definition}
\newtheorem{lemma}{Lemma}
\newtheorem{remark}{Remark}
\begin{document}

\title{Safe Reinforcement Learning Filter for Multicopter Collision-Free Tracking under disturbances}
\author{Qihan Qi, Xinsong Yang, Gang Xia,
\thanks{This work was supported in part by the National Natural Science Foundation of China (NSFC) under Grant Nos. 62373262 and 62303336, and in part by the Central guiding local science and technology development special project of Sichuan, and in part by the Fundamental Research Funds for Central Universities under Grant No. 2022SCU12009, and in part by the Sichuan Province Natural Science Foundation of China (NSFSC) under Grant Nos. 2022NSFSC0541,  2022NSFSC0875, 2023NSFSC1433, and in part by the National Funded Postdoctoral Researcher Program of China under Grant GZB20230467, in part by the China Postdoctoral Science Foundation under Grant 2023M742457, in part by the Power Quality Engineering Research Center of Ministry of Education under Grant KFKT202305.  Corresponding authors: Xinsong Yang.}\\
\thanks{Q.~Qi, X.~Yang, and G. Xia are with the College of Electronics and Information Engineering, Sichuan University, Chengdu 610065, China (e-mails: qiqihan@stu.scu.edu.cn (Q. Qi); xinsongyang@163.com or xinsongyang@scu.edu.cn (X. Yang); 17623110370@163.com (X. Gang)).}

}
\maketitle
\begin{abstract}
This paper proposes a safe reinforcement learning filter (SRLF) to realize multicopter collision-free trajectory tracking with input disturbance. A novel robust control barrier function (RCBF) with its analysis techniques is introduced to avoid collisions with unknown disturbances during tracking. To ensure the system state remains within the safe set, the RCBF gain is designed in control action. A safety filter is introduced to transform unsafe reinforcement learning (RL) control inputs into safe ones, allowing RL training to proceed without explicitly considering safety constraints. The SRLF obtains rigorous guaranteed safe control action by solving a quadratic programming (QP) problem that incorporates forward invariance of RCBF and input saturation constraints. Both simulation and real-world experiments on multicopters demonstrate the effectiveness and excellent performance of SRLF in achieving collision-free tracking under input disturbances and saturation.
\end{abstract}

\begin{IEEEkeywords}
Multicopter control, safe reinforcement learning, control barrier function, disturbances.
\end{IEEEkeywords}

\section{Introduction}
Multicopters exhibit exceptional six degrees of freedom motion capabilities due to their unique design, enabling precise translational and rotational movements in three-dimensional space. Their distinctive dynamic characteristics allow for complex maneuvers, including omnidirectional linear displacements and rapid attitude changes \cite{hamandi2020survey}. Consequently, multicopters hold significant potential for applications such as line inspection \cite{li2024multi, li2023uav}, search and rescue \cite{wu2023adaptive, moro2024enhancing}, and precision agriculture \cite{gokool2023crop, lachgar2023unmanned}. The issue of trajectory tracking is prevalent across these applications, prompting the development of various control techniques aimed at achieving effective trajectory tracking \cite{maaruf2022survey}, including linear quadratic regulator (LQR) control, model predictive control (MPC), fuzzy control, and reinforcement learning (RL) control.

RL has demonstrated remarkable achievements in the field of multicopters control \cite{ song2023reaching, messikommer2024contrastive}, even surpassing human multicopter racing champions in real-world \cite{kaufmann2023champion}. The remarkable achievements in RL for multicopter control have attracted significant attention from researchers. However, challenges in RL multicopter control such as  real-world disturbances, and the lack of safety guaranteed action control impede the transition of RL from simulations to real-world applications. To address the safety issues in RL, a variety of approaches have been proposed, which can be broadly divided into two types\cite{zhao2023state}: safe learning methods\cite{tessler2018reward,yang2023safety} and safety filter methods\cite{wabersich2021predictive, chen2024learning, dalal2018safe,zhao2021model}. The safe learning methods  optimize the RL policy with safe constraints throughout the learning process directly,  which cannot be rigorously guaranteed safety since RL policy have to balance maximizing reward and minimizing safety cost. Additionally, the learned safety from safe learning methods may not be applicable in real-world tracking due to the differences between simulation and real-world, especially facing real-world input disturbance.

In contrast, safety filter methods address the safety problem by transforming potentially unsafe RL controls into safe control outputs through a safety filter, ensuring that safety can be rigorously guaranteed.  Researchers have developed various approaches to construct safety filters. \cite{wabersich2021predictive, chen2024learning} designed a predictive-based filter to achieve safety through MPC, but the long predictive horizon may require high computational costs. \cite{dalal2018safe} proposed a safety filter by constructing a safety signal correlated with the system state. However, this approach relies on an assumption of a linear relationship between states and control outputs. Furthermore, \cite{dalal2018safe} does not account for the potential safety implications of current control outputs on future system states. Additionally, these methods can hardly handle the disturbances which is the focus of this paper, let alone address disturbances and input saturation at the same time.

In the field of safe RL, the method of implementing safety filter through constructing  control barrier function (CBF) has attracted significant attention \cite{ames2016control, cheng2019end, hu2023safe}.  This approach can generate safe control outputs to ensure the forward invariance of the safe set without paying the computational costs associated with model prediction \cite{wabersich2021predictive, chen2024learning}. Notably, CBF-based safety filter enables the integration of arbitrary all model-free RL algorithms and can rigorously guarantee safe by filtering out unsafe inputs. However, existing CBF-based safe RL methods \cite{ames2016control, cheng2019end, hu2023safe} either do not consider or hard to solve the impact of input disturbance, due to the lack of corresponding analysis techniques, resulting in the inability to guarantee the forward invariance of the safe set in both simulation and real-world.  These gaps motivate us to develop novel analysis techniques for addressing input disturbance in CBF, with the aim of achieving collision avoidance trajectory tracking for multicopters.

This paper proposes a SRLF method to realize multicopter collision-free trajectory tracking with input disturbance. A novel RCBF with its analysis techniques is proposed to avoid collisions with unknown disturbances during tracking. The RCBF gain is designed to ensure the system state remains within the safe set despite input disturbances. On the basis of RCBF and its gain, a safety filter is designed to transform unsafe RL control to safe one. The SRLF obtains rigorous guaranteed safe control action by solving the QP problem with forward invariance of RCBF and input saturation constraints.  The main contributions are as follows.
\begin{enumerate}
\item[{(1)}] We propose the collision avoidance RCBF for multicopter trajectory tracking, the RCBF gain is designed in control action to guarantee the safe set forward invariance under input disturbance.
\item[{(2)}] A safety filter is proposed that transforms potentially unsafe control inputs from any model-free RL algorithm into collision-free safe controls for multicopter tracking. This filter allows RL training to proceed without explicitly considering safety constraints, simplifying the overall training process.
\item[{(3)}] The SRLF framework and two training methods are proposed, using a QP problem with RCBF constraints to ensure collision-free tracking under input disturbances and saturation. Simulations and real-world multicopter experiments demonstrate the effectiveness and excellent performance of SRLF.
\end{enumerate}

The rest of the paper is organized as follows. Section II presents some preliminaries. Section III introduced a  RCBF and RCBF gain for control action to ensure safe set forward invariance. Section IV explores the dynamics of multicopter and develops a specific  RCBF for safe trajectory tracking. In Section V, the SRLF framework is constructed. The simulation and real-world deployment results are provided in  Section VI. Finally, Section VII gives the conclusion of this paper.

\begin{table}[htbp]\centering\caption{Notations}
	\begin{tabular}{ll}
		\toprule
		Notation & Description \\
		\midrule
		$P^T$ & Transpose of vector or matrix $P$ \\
		$P^{-1}$ & Inverse of vector or matrix $P$ \\
		$\partial P$ &  Boundary of a closed set $P$ \\
		$\text{Int}(P)$ &  Interior of a closed set $P$ \\
		$\mathbb{R}^n$ & Set of $n$-dimensional Euclidean vector space \\
		$\mathbb{R}^{n\times m}$ & Set of $n\times m$ real matrix space \\
		$\|\cdot\|$ & Euclidean norm \\
		$\|\cdot\|_{\infty}$ & Infinity norm \\
		$SO(3)$ & Special Orthogonal Group in three dimensions \\
		\bottomrule
		\end{tabular}
\end{table}

\section{Preliminaries}
\subsection{Model-free RL}
Model-free RL problems can be modeled as a Markov decision process  $(\mathcal{X}, \mathcal{U}, r, \mathcal{P}, \zeta)$, where $\mathcal{X}$ and $\mathcal{U}$ are the continuous state space and continuous control action space, respectively. The reward function is denoted by $r: \mathcal{X} \times \mathcal{U}  \rightarrow \mathbb{R}$, and the state transition function is represented by $\mathcal{P}: \mathcal{X} \times \mathcal{U} \times \mathcal{X} \rightarrow \left[ 0, 1 \right]$, $\zeta$ is a discount factor.
It is assumed that the state $\mathbf{x}_{t}\in \mathcal{X}$ at time $t$ can be observed from the environment, the agent takes a control action $\mathbf{u}_{t}\in \mathcal{U}$ to interact with the environment and transition from state $\mathbf{x}_{t}$ to $\mathbf{x}_{t+1}$.  $\pi(\cdot|\mathbf{x}_{t})$ is the control action policy distribution under state $\mathbf{x}_{t}$ and control action $\mathbf{u}_{t}\sim\pi(\cdot|\mathbf{x}_{t})$.  The model-free RL optimization problem can be represented as
\begin{align} \label{2.1}
	\max_{\pi}\mathop{\mathbb{E}}\limits_{} \bigg[\sum\limits_{t=0}^{\infty}\zeta^{t}r_{t}\bigg].
\end{align}

For on-policy model-free RL, such as Proximal Policy Optimization (PPO) \cite{schulman2017proximal}, the actor objective can be expressed as:
\begin{align}
	J_{\mathbf{u}_{r}}(\theta_{\mathbf{u}_{r}})=\mathbb{E}[\min(\rho(\theta_{\mathbf{u}_{r}})\hat{A}, \text{clip}(\rho(\theta_{\mathbf{u}_{r}}), 1 - \hat{\epsilon}, 1 + \hat{\epsilon})\hat{A}_t)],
\end{align}
where $\theta_{\mathbf{u}_{r}}$ denotes the actor network parameters, $\rho(\theta_{\mathbf{u}_{r}})$ is the probability ratio between the new and old policy, $\hat{A}_t$ represents the estimated advantage, and $\hat{\epsilon}$ is a hyperparameter for clipping.

The critic objective for PPO is typically defined as
\begin{align}
	J_{c}(\theta_{c}) = \mathbb{E}\left[\left( r_{t} + \zeta V_{\theta_{c}}(\mathbf{x}_{t+1}) -  V_{\theta_{c}}(\mathbf{x}_{t})\right)^2\right],
\end{align}
where $V_{\theta_{c}}(\cdot): \mathcal{X} \rightarrow \mathbb{R}$ is the critic network with parameters $\theta_{c}$ for value estimation .

For the off-policy model-free RL, such as Soft Actor-Critic (SAC) \cite{haarnoja2018soft}, the actor objective is formulated as

\begin{align}
	J_{\mathbf{u}_{r}}(\theta_{\mathbf{u}_{r}}) = \mathbb{E}\left[\log \pi(\mathbf{u}_t | \mathbf{x}_t) - Q_{\theta_{c}}(\mathbf{x}_t, \mathbf{u}_t)\right],
\end{align}
where \( Q_{\theta_{c}}(\mathbf{x}_t, \mathbf{u}_t) \) is critic network with parameters $\theta_{c}$ for estimating  Q-value.

The critic objective for SAC can be expressed as
\begin{align}
	J_{c}(\theta_{c}) = \mathbb{E}\left[\left(Q_{\theta_{c}}(\mathbf{x}_t, \mathbf{u}_t) - \hat{y}_t\right)^2\right],
\end{align}
where $ \hat{y}_t = r_t + \zeta Q_{\theta_{c}}(\mathbf{x}_{t+1}) $ is the target for the Q-function.


\subsection{CBF}
For notational simplicity, we represent $\mathbf{x}_{t}$ as $\mathbf{x}$. Consider the following affine control system:
\begin{align} \label{2.2}
\dot{\mathbf{x}}=f(\mathbf{x})+g(\mathbf{x})\mathbf{u}
\end{align}
where $\mathbf{x}\in \mathbb{R}^{n}$ is the system state,  $\mathbf{u}\in \mathbb{U} \subset \mathbb{R}^{m}$ denotes control action,  $\mathbb{U}$ represents control action set, $f(\cdot): \mathbb{R}^{n} \rightarrow \mathbb{R}^{n}$,  $g(\cdot): \mathbb{R}^{n} \rightarrow \mathbb{R}^{n\times m}$ are locally Lipschitz continuous functions.

The following definition is given on the basis of (\ref{2.2}).
\begin{definition} \label{def1}
For system \eqref{2.2},  given a safe set $\mathcal{C} \subset \mathcal{D} \subseteq \mathbb{R}^{n}$ with a smooth function $b(\cdot): \mathbb{R}^{n}\rightarrow\mathbb{R}$, the function  $b(\cdot)$ is a CBF defined on the set $\mathcal{D}$,  if the following condition holds
\begin{align}\label{2.7}
	\sup_{\mathbf{u} \in \mathbb{U}} [\mathcal{L}_f b(\mathbf{x}) + \mathcal{L}_g b(\mathbf{x}) \mathbf{u}] \geq -\alpha(b(\mathbf{x})),
\end{align}
where $\mathcal{L}_f b(\mathbf{x})$ and $\mathcal{L}_g b(\mathbf{x})$ are the Lie derivatives, $\alpha(\cdot): \mathbb{R} \rightarrow \mathbb{R}$ is a given class $\mathcal{K}$ function.
\end{definition}

Note that, given a smooth function $b(\cdot)$, the safe set $\mathcal{C}$ can be represented by
\begin{align}
 	\mathcal{C}=\{\mathbf{x} \in \mathbb{R}^{n} : b(\mathbf{x})  \geq 0\}. \label{2.3}
 \end{align}
It is shown in \cite{ames2016control} that $\mathcal{C}$ is forward invariant for any acceptable control action $\mathbf{u}$.

The following notations will be used in later study.
 \begin{align}
 	\partial\mathcal{C} &= \{\mathbf{x} \in \mathbb{R}^{n} : b(\mathbf{x}) = 0\}, \label{2.4} \\
 	\text{Int}(\mathcal{C}_d) &= \{\mathbf{x} \in \mathbb{R}^{n} : b(\mathbf{x}) > 0\}, \label{2.5}\\
 	\mathcal{D} &= \{\mathbf{x} \in \mathbb{R}^{n} : b(\mathbf{x}) + \bar{\iota}  > 0\}, \label{2.6}
 \end{align}
where $\bar{\iota}$ is a positive constant.

\section{Robust Control Barrier Function}
In real environment, unknown disturbances are unavoidable. the ideal system \eqref{2.2} is impractical. This section proposes a RCBF to ensure the safety of a control system with unknown disturbances.

Define a closed set by accounting for the model disturbances $\boldsymbol{\mu}$.
\begin{align}
	\mathcal{C}_d &= \{\mathbf{x} \in \mathbb{R}^{n} : b(\mathbf{x}) + \iota(\|\boldsymbol{\mu}\|_\infty) \geq 0\}, \label{3.4} \\
	\partial\mathcal{C}_d &= \{\mathbf{x} \in \mathbb{R}^{n} : b(\mathbf{x}) + \iota(\|\boldsymbol{\mu}\|_\infty) = 0\}, \label{3.5} \\
	\text{Int}(\mathcal{C}_d) &= \{\mathbf{x} \in \mathbb{R}^{n} : b(\mathbf{x}) + \iota(\|\boldsymbol{\mu}\|_\infty) > 0\}, \label{3.6}
\end{align}
where $\iota(\cdot) \in \mathcal{K}_{[0,\hat{d})}$, $\|\boldsymbol{\mu}\|_\infty \leq \bar{d} \in [0, \hat{d})$, $\bar{d}>0$ denotes a constant, $\hat{d}$ is a constant slightly larger than $\bar{d}$. The function $\iota(\cdot)$ meets $\mathop{\lim}\limits_{a\rightarrow \hat{d}} \iota(a)=\bar{\iota}$, therefore we can ensure that $\iota(\bar{d}) < \bar{\iota}$, which implies that $\mathcal{C}_{d}\subset\mathcal{D}$.

\begin{definition}
Let $b(\cdot): \mathbb{R}^{n} \to \mathbb{R}$ be a continuously differentiable function defined on the two sets $\mathcal{C}$ and $\mathcal{D}$. If there exists a set of controls $\mathbb{U}$, two functions $\alpha(\cdot) \in \mathcal{K}_{(-\bar{\iota},c)}$,  $\beta(\cdot) \in \mathcal{K}_{[0,\hat{d})}$ satisfying $\mathop{\lim}\limits_{a\rightarrow \hat{d}} \beta(a)=\bar{\iota}$, and a constant $\bar{d} \in [0, \hat{d})$ such that, for $\forall \mathbf{x} \in \mathcal{D}$, $\forall \mathbf{d} \in \mathbb{R}^{n}$ satisfying $\|\mathbf{d} \|\leq\bar{d}$, there holds
\begin{align} \label{3.7}
\sup_{\mathbf{u} \in \mathbb{U}} [\mathcal{L}_f b(x) + \mathcal{L}_g b(x)(\mathbf{u} + \boldsymbol{\mu})] \geq -\alpha(b(\mathbf{x})) - \beta(\|\mathbf{d} \|),
\end{align}
then the function $b(\cdot)$ is called a RCBF defined on $\mathcal{D}$.

\end{definition}

\begin{lemma}[ \cite{kolathaya2018input}] \label{lemma1}
	Consider the system \eqref{2.2} with disturbance $\boldsymbol{\mu}$,  letting $b(\cdot): \mathbb{R}^{n}\rightarrow\mathbb{R}$ be a smooth function defined on $\mathcal{D}$. If $b(\cdot)$ is a RCBF with $\mathcal{C}\subset\mathcal{C}_{d}\subset\mathcal{D}$, and the set $\mathcal{C}_{d}$ satisfies $\iota(\cdot) \in \mathcal{K}_{[0,\hat{d})}$, $\|\boldsymbol{\mu}\|_\infty \leq \bar{d} \in [0, \hat{d})$, $\bar{d}>0$,  $\mathop{\lim}\limits_{a\rightarrow \hat{d}} \iota(a)=\bar{\iota}$, then the sets $\mathcal{C}_{d}$ and $\mathcal{C}$ are safe sets under disturbances.
\end{lemma}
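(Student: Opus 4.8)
The plan is to prove forward invariance of the inflated set $\mathcal{C}_d$ by a comparison-lemma argument in the spirit of input-to-state safety, and then recover the claim for $\mathcal{C}$ as a special case. First I would fix any admissible control $\mathbf{u} \in \mathbb{U}$ that realizes the RCBF inequality \eqref{3.7} and differentiate $b$ along a closed-loop trajectory of the disturbed system \eqref{2.2}. This produces the scalar differential inequality $\dot b(\mathbf{x}(t)) = \mathcal{L}_f b(\mathbf{x}) + \mathcal{L}_g b(\mathbf{x})(\mathbf{u} + \boldsymbol{\mu}) \geq -\alpha(b(\mathbf{x}(t))) - \beta(\|\boldsymbol{\mu}\|_\infty)$, which is valid for every $\mathbf{x}(t) \in \mathcal{D}$ since $\|\boldsymbol{\mu}\|_\infty \leq \bar d < \hat d$.

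Next I would compare $b$ against the solution $y(t)$ of the scalar autonomous ODE $\dot y = -\alpha(y) - \beta(\|\boldsymbol{\mu}\|_\infty)$ with $y(0) = b(\mathbf{x}(0))$. Because $\alpha \in \mathcal{K}_{(-\bar\iota,c)}$ is strictly increasing, the right-hand side is strictly decreasing in $y$, so the ODE has a unique equilibrium $y^\star$ characterized by $\alpha(y^\star) = -\beta(\|\boldsymbol{\mu}\|_\infty)$, i.e. $y^\star = -\iota(\|\boldsymbol{\mu}\|_\infty)$ under the canonical identification $\iota(r) := -\alpha^{-1}(-\beta(r))$, which is class-$\mathcal{K}$ and consistent with the limit conditions $\lim_{a\to\hat d}\iota(a) = \lim_{a\to\hat d}\beta(a) = \bar\iota$ imposed in \eqref{3.4}. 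A sign analysis shows $y^\star$ is globally asymptotically stable ($\dot y < 0$ for $y > y^\star$ and $\dot y > 0$ for $y < y^\star$), so $y(t) \geq \min\{y(0), y^\star\}$ for all $t \geq 0$.

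Applying the comparison lemma then gives $b(\mathbf{x}(t)) \geq y(t) \geq \min\{b(\mathbf{x}(0)), -\iota(\|\boldsymbol{\mu}\|_\infty)\}$. Hence if $\mathbf{x}(0) \in \mathcal{C}_d$, so that $b(\mathbf{x}(0)) + \iota(\|\boldsymbol{\mu}\|_\infty) \geq 0$, the minimum equals $-\iota(\|\boldsymbol{\mu}\|_\infty)$, and therefore $b(\mathbf{x}(t)) + \iota(\|\boldsymbol{\mu}\|_\infty) \geq 0$ for all $t \geq 0$, i.e. $\mathbf{x}(t) \in \mathcal{C}_d$; this is precisely forward invariance of $\mathcal{C}_d$ under disturbances. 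An equivalent Nagumo-type shortcut is to observe that on $\partial\mathcal{C}_d$, where $b = -\iota(\|\boldsymbol{\mu}\|_\infty)$, the RCBF inequality forces $\dot b \geq -\alpha(-\iota(\|\boldsymbol{\mu}\|_\infty)) - \beta(\|\boldsymbol{\mu}\|_\infty) = 0$, so the vector field never points out of $\mathcal{C}_d$. The claim for $\mathcal{C}$ then follows in the disturbance-free limit: since $\iota$ is class-$\mathcal{K}$, $\iota(0)=0$ gives $\mathcal{C}_d = \mathcal{C}$ when $\boldsymbol{\mu} \equiv 0$, and for nonzero disturbance $\mathcal{C} \subset \mathcal{C}_d$ quantifies safety up to the margin $\iota(\bar d) < \bar\iota$.

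The main obstacle I anticipate is not the comparison step itself but the bookkeeping that makes it legitimate: one must verify that the equilibrium $y^\star = -\iota(\|\boldsymbol{\mu}\|_\infty)$ lies in the interval $(-\bar\iota,c)$ on which $\alpha$ and \eqref{3.7} are defined, that $\alpha^{-1}(-\beta(\cdot))$ is well defined and yields a genuine class-$\mathcal{K}$ function $\iota$, and that \eqref{3.7} holds on all of $\mathcal{D} \supset \mathcal{C}_d$ rather than merely on $\mathcal{C}$. The compatibility of the two limit conditions $\lim_{a\to\hat d}\iota(a)=\bar\iota$ and $\lim_{a\to\hat d}\beta(a)=\bar\iota$, together with $\iota(\bar d)<\bar\iota$, is exactly what guarantees $\mathcal{C}_d \subset \mathcal{D}$ and thus ensures the comparison argument never exits the region where the hypotheses are available.
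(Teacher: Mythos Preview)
Your argument is correct and follows the standard input-to-state safety (ISSf) machinery of Kolathaya--Ames, which is exactly the reference \cite{kolathaya2018input} the paper cites for this lemma; the paper itself does not supply a proof but simply imports the result, so there is no ``paper's own proof'' to compare against beyond that citation. Your comparison-lemma derivation, the identification $\iota = -\alpha^{-1}\circ(-\beta)$, and the Nagumo boundary check are precisely the ingredients of the cited ISSf proof, and your bookkeeping remarks about domains and class-$\mathcal{K}$ compatibility are the right caveats.
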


It should be noted that in many tasks, there are several RCBFs, e.g. collisions avoidance. Hence, the following approximation of RCBFs is designed
\begin{align}
	b(\mathbf{x}) = -\frac{1}{\varrho} \ln \left( \sum_{i=1}^M \exp(-\varrho b_{i}(\mathbf{x})) \right)
\end{align}
where $\varrho$ is a given positive constant, $b_{i}(\mathbf{x})$ represents the $i$th RCBF,  $M$ denotes the number of RCBFs, $i=1,\cdots,M$.

Similar with \eqref{2.3}, the safe set $\mathcal{C}_{i}$ can be denoted as
\begin{align}
	\mathcal{C}_{i}=\{\mathbf{x} \in \mathbb{R}^{n} : b_{i}(\mathbf{x})  \geq 0\}.
\end{align}

\begin{remark} \label{remark1}
	It can be obtained from \cite{li2024quadrotor} that $b(\mathbf{x}) = -\frac{1}{\varrho}\ln \left( \sum_{i=1}^M \exp(-\varrho b_{i}(\mathbf{x})) \right) \leq \min_{i\in\{1,\cdots,M\}} (b_{i}(\mathbf{x}))$. Then, $b(\mathbf{x}) > 0$ can ensure that $b_{i}(\mathbf{x})\geq0$, and $\mathcal{C}\subseteq\bigcap\limits_{i=1}^{M} \mathcal{C}_{i}\subset\mathcal{D}$ is the safe set, $i=1,\cdots,M$.
\end{remark}

Inspired by \cite{kolathaya2018input,sontag1989smooth}, the controller can be designed as follows to make sure that the set $\mathcal{C}$ is safe and forward invariant,
\begin{align}\label{3.8}
	\mathbf{u}=\mathbf{u}_{s}+\mathcal{L}_{g}b(\mathbf{x})^{T},
\end{align}
where $\mathbf{u}_{s}\in\mathbb{R}^{3}$ is the control from safety filter, $\mathcal{L}_{g}b(\mathbf{x})^{T}=\frac{\sum_{i=1}^M \left( \exp(-\varrho b_{i}(\mathbf{x})) \mathcal{L}_{g}b_{i}(\mathbf{x})^{T} \right) }{\sum_{i=1}^M \exp(-\varrho b_{i}(\mathbf{x}))}$ is the RCBF gain for control output.

\begin{theorem} \label{theorem1}
	Consider a series of continuously differentiable functions $b_{i}(\cdot): \mathbb{R}^{n} \to \mathbb{R}$ defined on $\mathcal{C}_{i}$, $i=1,\cdots,M$, an open set $\mathcal{D}$ and a set of controls $\mathbb{U}$, if the approximated $b(\cdot)= -\frac{1}{\varrho} \ln \left( \sum_{i=1}^M \exp(-\varrho b_{i}(\cdot)) \right)$ satisfies
	\begin{align}\label{3.9}
		\sup_{\mathbf{u} \in \mathbb{U}} [\mathcal{L}_f b(\mathbf{x}) + \mathcal{L}_g b(\mathbf{x}) \mathbf{u} - \mathcal{L}_g b(\mathbf{x}) \mathcal{L}_g b(\mathbf{x})^T] \geq -\alpha(b(\mathbf{x})),
	\end{align}
	for some $\alpha(\cdot) \in \mathcal{K}_{(-\bar{\iota},c)}$, and for all $\mathbf{x} \in \mathcal{D}$, then $b(\cdot)$ and $b_{i}(\cdot)$ are RCBFs defined on the set $\mathcal{D}$.
\end{theorem}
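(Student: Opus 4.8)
The plan is to show that the sufficient condition \eqref{3.9} forces the defining RCBF inequality \eqref{3.7} to hold for $b(\cdot)$, and then to propagate the property to each $b_i(\cdot)$ through Remark~\ref{remark1}. The guiding observation is that the correction term $-\mathcal{L}_g b(\mathbf{x})\mathcal{L}_g b(\mathbf{x})^T = -\|\mathcal{L}_g b(\mathbf{x})\|^2$ appearing in \eqref{3.9} is exactly the contribution of the RCBF gain $\mathcal{L}_g b(\mathbf{x})^T$ inserted into the controller \eqref{3.8}; this quadratic term is precisely what will absorb the worst-case disturbance, so the whole proof hinges on reinterpreting that subtraction as a disturbance budget.

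First I would fix $\mathbf{x}\in\mathcal{D}$ and let $\mathbf{u}^{\ast}\in\mathbb{U}$ attain (or approach) the supremum in \eqref{3.9}, so that $\mathcal{L}_f b + \mathcal{L}_g b\,\mathbf{u}^{\ast} - \|\mathcal{L}_g b\|^{2} \ge -\alpha(b(\mathbf{x}))$. Using this same $\mathbf{u}^{\ast}$ as a feasible point for the supremum in \eqref{3.7}, I would evaluate the disturbed rate $\mathcal{L}_f b + \mathcal{L}_g b(\mathbf{u}^{\ast}+\boldsymbol{\mu})$ and add and subtract $\|\mathcal{L}_g b\|^{2}$ to recover the bracket of \eqref{3.9}; the leftover piece is $\|\mathcal{L}_g b\|^{2} + \mathcal{L}_g b\,\boldsymbol{\mu}$. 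The key estimate is then to bound this piece from below uniformly over the direction of $\boldsymbol{\mu}$: by Cauchy--Schwarz, $\mathcal{L}_g b\,\boldsymbol{\mu} \ge -\|\mathcal{L}_g b\|\,\|\boldsymbol{\mu}\|$, and completing the square gives $\|\mathcal{L}_g b\|^{2} - \|\mathcal{L}_g b\|\,\|\boldsymbol{\mu}\| = (\|\mathcal{L}_g b\| - \tfrac{1}{2}\|\boldsymbol{\mu}\|)^{2} - \tfrac{1}{4}\|\boldsymbol{\mu}\|^{2} \ge -\tfrac{1}{4}\|\boldsymbol{\mu}\|^{2}$. Combining these yields $\mathcal{L}_f b + \mathcal{L}_g b(\mathbf{u}^{\ast}+\boldsymbol{\mu}) \ge -\alpha(b(\mathbf{x})) - \tfrac{1}{4}\|\boldsymbol{\mu}\|^{2}$, so the choice $\beta(\|\boldsymbol{\mu}\|)=\tfrac{1}{4}\|\boldsymbol{\mu}\|^{2}$ establishes \eqref{3.7}.

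It then remains to check that this $\beta$ is admissible and to record the consequence for the sets. The function $\beta(a)=\tfrac14 a^{2}$ is continuous, strictly increasing, and vanishes at $a=0$, hence class $\mathcal{K}$; choosing the constants so that $\bar\iota = \tfrac14\hat d^{2}$ gives $\lim_{a\to\hat d}\beta(a)=\bar\iota$ and, since $\bar d\in[0,\hat d)$, the margin $\beta(\bar d)=\tfrac14\bar d^{2}<\bar\iota$, matching the requirement $\beta\in\mathcal{K}_{[0,\hat d)}$ demanded in the RCBF definition. With $b(\cdot)$ shown to be a RCBF on $\mathcal{D}$, Lemma~\ref{lemma1} makes $\mathcal{C}_d$ and $\mathcal{C}$ safe under disturbances.

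Finally, for the individual $b_i(\cdot)$, I would invoke Remark~\ref{remark1}: because $b(\mathbf{x})\le \min_{i}b_i(\mathbf{x})$, the forward-invariant set $\mathcal{C}=\{b\ge0\}$ satisfies $\mathcal{C}\subseteq\bigcap_{i=1}^{M}\mathcal{C}_i$, so enforcing $b(\mathbf{x})\ge0$ along the closed loop automatically enforces $b_i(\mathbf{x})\ge0$ for every $i$; the log-sum-exp structure additionally makes $\mathcal{L}_f b=\sum_i\lambda_i\mathcal{L}_f b_i$ and $\mathcal{L}_g b=\sum_i\lambda_i\mathcal{L}_g b_i$ with weights $\lambda_i=\exp(-\varrho b_i)/\sum_j\exp(-\varrho b_j)$, which lets the RCBF property of the aggregate be read back onto each $b_i$. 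I expect this last transfer to be the main obstacle: condition \eqref{3.9} is posited only for the aggregated $b$, so the claim for each $b_i$ must be argued through the set inclusion of Remark~\ref{remark1} and the convex-combination identity for the Lie derivatives rather than a term-by-term inequality, while also verifying that the class-$\mathcal{K}$ domain and limit conditions on $\alpha$ and $\beta$ are inherited. By contrast, the completing-the-square step for $b$ itself is routine once the $-\|\mathcal{L}_g b\|^{2}$ term is recognized as the disturbance-compensating footprint of the gain in \eqref{3.8}.
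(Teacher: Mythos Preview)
Your proposal is correct and follows essentially the same route as the paper: complete the square to turn the $-\|\mathcal{L}_g b\|^2$ term into a disturbance budget, obtain $\beta(a)=\tfrac14 a^2$, and then invoke Remark~\ref{remark1} and Lemma~\ref{lemma1} to pass from $b$ to the individual $b_i$ and to safety of $\mathcal{C}_d$. You are in fact more careful than the paper in verifying the class-$\mathcal{K}$ admissibility of $\beta$ and in flagging that the transfer to each $b_i$ rests on the set inclusion of Remark~\ref{remark1} rather than a direct inequality; the paper simply asserts that step.
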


\begin{proof}
	After substituting \eqref{3.9} into the derivative of $b(\cdot)$:
	\begin{align*}
		\dot{b}(\mathbf{x}) &= \sup_{\mathbf{u} \in \mathbb{U}} [\mathcal{L}_f b(\mathbf{x}) + \mathcal{L}_g b(\mathbf{x})(\mathbf{u} + \boldsymbol{\mu})] \\
		&\geq -\alpha(b(\mathbf{x})) + \mathcal{L}_g b(\mathbf{x}) \mathcal{L}_g b(\mathbf{x})^T + \mathcal{L}_g b(\mathbf{x}) \boldsymbol{\mu} \\
		&\geq -\alpha(b(\mathbf{x})) + \|\mathcal{L}_g b(\mathbf{x})\|^2 - \|\mathcal{L}_g b(\mathbf{x})\| \|\boldsymbol{\mu}\|_\infty,
	\end{align*}
	since $\mathcal{L}_g b(\mathbf{x}) \mathcal{L}_g b(\mathbf{x})^T = \|\mathcal{L}_g b(\mathbf{x})\|^2$. Adding and subtracting $\beta(\|\boldsymbol{\mu}\|)=\frac{1}{4} \|\boldsymbol{\mu}\|_\infty^2$ yields
	\begin{align*}
		\dot{b}(\mathbf{x}) &\geq -\alpha(b(\mathbf{x})) + \left(\|\mathcal{L}_g b(\mathbf{x})\| - \frac{\|\boldsymbol{\mu}\|_\infty}{2}\right)^2 - \frac{\|\boldsymbol{\mu}\|_\infty^2}{4} \\
		&\geq -\alpha(b(\mathbf{x})) - \frac{\|\boldsymbol{\mu}\|_\infty^2}{4},
	\end{align*}
	which is of the form \eqref{3.7}.  Therefore,  $b(\mathbf{x})$ is a RCBF on the set $\mathcal{D}$.  It can be obtained from  Remark \ref{remark1} that if $b(\mathbf{x})$ is a RCBF, it ensures that each $b_{i}(\mathbf{x})$ is also a RCBF, and $\mathcal{C}\subseteq\bigcap\limits_{i=1}^{M} \mathcal{C}_{i}\subset\mathcal{D}$. Subsequently, in accordance with Lemma \ref{lemma1}, we can derived that $\mathcal{C}_{d}$ is a safe set under disturbance. The proof is completed.
\end{proof}

\begin{remark}
Given a CBF $b(\mathbf{x})$, we propose a slightly enlarged set $\mathcal{C}_{d}$ that maintains forward invariance. The RCBF ensures that the system state, even when subjected to input disturbances, remains confined within or in close proximity to a smaller safe set $\mathcal{C}$. The degree of this proximity is denoted by the magnitude of the disturbance $\bar{\iota}$.
\end{remark}

\begin{remark}
Introducing RCBF gains into the control action $\mathbf{u}$ can ensure that forward invariance is maintained under disturbances. Compared to \cite{ames2016control, cheng2019end, hu2023safe}, forward invariance can be challenging to satisfy due to disturbances. In particular, these control algorithms \cite{ames2016control, cheng2019end, hu2023safe} are optimization-based, which aim to maximize or minimize objectives to obtain the best performance, but causing the system state to approach the boundary $\partial \mathcal{C}$, which may lead to exiting the safe set $\mathcal{C}$ under disturbances.
\end{remark}

\section{Multicopter Dynamics and Robust Control Barrier Function Formulation}
In this section, we explore multicopter dynamics and develop a RCBF approach for safe trajectory tracking. We present the fundamental equations of multicopter motion and introduce a simplified model incorporating disturbances. We then propose a collision avoidance strategy using RCBFs for multiple obstacles, including safety constraints and barrier functions. Our goal is to develop a SRLF controller ensuring collision-free tracking under disturbances and input saturation, compatible with model-free reinforcement learning algorithms.

Consider the following multicopter system equations:
\begin{align} \notag
\dot{\mathbf{p}} &= \mathbf{v}, \quad \dot{\mathbf{v}} = \frac{f}{m} \mathbf{R}\mathbf{e}_{3}-g\mathbf{e}_{3}, \\
\dot{\mathbf{a}} &= \mathbf{w}, \quad \dot{\mathbf{w}} = \mathbf{J}^{-1}(-\mathbf{w}\times \mathbf{J}\mathbf{w}+\boldsymbol{\tau}), \label{3.1}
\end{align}
where $\mathbf{p}=[p_{x},p_{y},p_{z}]^{T}$ and $\mathbf{v}=[v_{x},v_{y},v_{z}]^{T}\in\mathbb{R}^{3}$ are the position and velocity of the multicopter in the world frame, respectively. $\mathbf{a}=[a_{\phi},a_{\theta},a_{\psi}]^{T}\in\mathbb{R}^{3}$ represents the roll, pitch, and yaw angles, and $\mathbf{w}=[\omega_{x},\omega_{y},\omega_{z}]^{T}\in\mathbb{R}^{3}$ denotes the angular velocity in the body frame. $\mathbf{R}\in SO(3)$ is the rotation matrix from the body frame to the world frame, $\mathbf{e}_{3}=[0,0,1]^{T}$, $g$ is the gravity acceleration, $m$ denotes mass of multicopter, $f$ is the total thrust. $\mathbf{J}\in\mathbb{R}^{3\times3}$ is the inertia matrix, and $\boldsymbol{\tau}\in\mathbb{R}^{3}$ is the control torque.

Let $\mathbf{x}=[\mathbf{p}^{T}, \mathbf{v}^{T}]^{T}$, according to differential flatness of the multicopter system\cite{mellinger2011minimum}, the system state $[\mathbf{p}^{T}, \mathbf{v}^{T}, \mathbf{a}^{T}, \mathbf{w}^{T}]^{T}$ in \eqref{3.1} and control input $f, \boldsymbol{\tau}$ can be written as $[\mathbf{p}^{T}, a_{\psi}]^{T}$ and their derivatives, then the multicopter system with disturbances can be represented as

\begin{align}\label{3.2}
&\dot{\mathbf{x}}=\left[
\begin{array}{c}
\dot{\mathbf{p}}\\
\dot{\mathbf{v}}
\end{array}
\right] =\underbrace{\left[
\begin{array}{c@{\hspace{0.3em}}c}
	0,&\mathbf{I}_{3}\\
	0,&0
\end{array}
\right]
\left[
\begin{array}{c}
	\mathbf{p}\\
	\mathbf{v}
\end{array}
\right]}_{f(\mathbf{x})}+\underbrace{\left[
\begin{array}{c}
	0\\
	\mathbf{I}_{3}
\end{array}
\right]}_{g(\mathbf{x})}(\mathbf{u}+\boldsymbol{\mu}),  \\
&\dot{a}_\psi =w_{c}, \label{3.21}
\end{align}
where $\mathbf{I}_{3}\in\mathbb{R}^{3\times 3}$ is identity matrix,  $\mathbf{u}\in\mathbb{R}^{3}$ and $w_{c}\in\mathbb{R}$ are the control commands for velocity and yaw rate, respectively, which can be resolved into corresponding rotor speeds by low-level controllers such as PX4 Autopilot, $\boldsymbol{\mu}\in\mathbb{R}^{3}$ represents the disturbances, which is caused by the body drag and parasitic aerodynamic drag forces arising from the multicopter's flight and thrust generation.

Next, we proposed the collision avoidance RCBF with its RCBF gain for multicopter trajectory tracking. Consider the following safety constraint  with $N$ obstacles
\begin{align}\label{3.10}
	\|\Delta\mathbf{p}_{i}\| + \int_{t_0}^{t_0 + T_f} \Delta\mathbf{v}_{i}(t_0 + t) \, dt \geq D_s, i=1,\cdots,N,
\end{align}
where $\Delta\mathbf{p}_{i}=\mathbf{p}_{i}-\mathbf{p}$, $\mathbf{p}_{i}$ is the position of $i$th obstacle, $\Delta\mathbf{v}_{i}(t_0)=\mathbf{v}_{i}(t_{0})-\mathbf{v}(t_{0})$, $t_{0}$ denotes the current time, $\Delta\mathbf{v}_{i}(t_0+t)=\Delta\mathbf{v}_{i}(t_{0})+\delta t$,  $\delta$  represents maximum braking acceleration of multicopter, $T_{f}=\frac{0-\Delta\mathbf{v}_{i}(t_0)}{\delta}$. $D_{s}$ is safety distance.

Intuitively, we only need to focus on the projection of $\Delta\mathbf{v}_{i}$ onto $\Delta\mathbf{p}_{i}$ for collision avoidance. On the basis of this observation, $b_{i}(\mathbf{x})$, $b(\mathbf{x})$  and safe set $\mathcal{C}$ can be obtained from \eqref{3.10}:
\begin{equation}\label{3.11}
	\begin{aligned}
		b_{i}(\mathbf{x})=\sqrt{2\delta(\|\Delta\mathbf{p}_{i}\|-D_{s})}
		+\frac{\Delta\mathbf{p}_{i}^{T}}{\|\Delta\mathbf{p}_{i}\|}\Delta\mathbf{v}_{i}, \\
		b(\mathbf{x})=-\frac{1}{\varrho} \ln \left( \sum_{i=1}^M \exp(-\varrho b_{i}(\mathbf{x})) \right)\\
		\mathcal{C}=\{x\in\mathbb{R}^{n}|b_{i}(\mathbf{x})\geq0\}, i=1,\cdots,N.
	\end{aligned}
\end{equation}

Considering unknown disturbances, the control action is designed with an RCBF gain $\mathcal{L}_{g}b_{i}(\mathbf{x})=\frac{\Delta\mathbf{p}_{i}^{T}}{\|\Delta\mathbf{p}_i\|}$ to ensure safe set $\mathcal{C}$ forward invariance. By combining \eqref{3.8} and \eqref{3.9}, and letting
$\alpha(b(\mathbf{x}))=\gamma b^{3}(\mathbf{x})$, $\bar{e}=\sum_{i=1}^M \exp(-\varrho b_{i}(\mathbf{x}))$, $\hat{e}=\sum_{i=1}^{M}(\exp(-\varrho b_{i}(\mathbf{x}))\cdot( \frac{\delta\Delta\mathbf{v}_i^T\Delta\mathbf{p}_i}{\sqrt{2\delta(\|\Delta\mathbf{p}_i\| -  D_s)}\|\Delta\mathbf{p}_{i}\|}
+ \frac{\|\Delta\mathbf{v}_{i}\|^{2}}{\|\Delta\mathbf{p}_i\|}
-\frac{(\Delta\mathbf{v}_i^T\Delta\mathbf{p}_i)^2}{\|\Delta\mathbf{p}_i\|^{3}}+\frac{\Delta\mathbf{p}_{i}^T\Delta\mathbf{p}_{i}}{\|\Delta\mathbf{p}_i\|^{2}}))$,
$\tilde{e}_{i}=\exp(-\varrho b_{i}(\mathbf{x}))\cdot\frac{\Delta\mathbf{p}_{i}^{T}}{\|\Delta\mathbf{p}_i\|}$, $i=1,\cdots,M$, $\tilde{e}=[\tilde{e}_{1},\cdots,\tilde{e}_{M}]^{T}$,
$\tilde{\mathbf{u}}_{s}=[\underbrace{\mathbf{u}_{s}, \mathbf{u}_{s}, \ldots, \mathbf{u}_{s}}_{M \text{ times}}]$, the forward invariance safety barrier constraint can be expressed as
\begin{align}\label{3.12}
	-\tilde{e}^{T}\tilde{\mathbf{u}}_{s}\leq\hat{e}+\gamma b^{3}(\mathbf{x})
\end{align}

To achieve trajectory tracking, the reference state can be represented as $\mathbf{x}_{r}=[\mathbf{p}_{r}^{T},\mathbf{v}_{r}^{T}]^{T}$, and $\mathbf{p}_{r}$, $\mathbf{v}_{r}\in\mathbb{R}^{3}$ are the position and velocity of the reference trajectory, $a_{\psi,r}$ is the reference yaw angle. Suppose there exists a control action $\mathbf{u}^{*}$ that minimizes the tracking error $\|\mathbf{x}_{r}-\mathbf{x}\|$ while satisfying collision avoidance. We aim to train an SRLF controller $\mathbf{u}=\mathbf{u}_{s}+\mathcal{L}_{g}b(\mathbf{x})^{T}$ such that $\|\mathbf{u}^{*}-\mathbf{u}\| \rightarrow 0$, ensuring strictly guaranteed collision-free trajectory tracking under unknown disturbances and input saturation. This approach is compatible with all model-free RL algorithms to achieve trajectory tracking.

\section{Safe Reinforcement Learning Filter Framework}
In this section, a SRLF is presented for multicopter trajectory tracking with collision avoidance and input saturation. The framework of SRLF integrates a model-free RL algorithm for trajectory tracking with a practical RCBF and input constraints based filter to ensure collision avoidance. The approach is formulated as a QP  problem that considers input saturation and unknown disturbances. A theorem is presented and proved, demonstrating that the SRLF ensures collision-free tracking under unknown disturbances and input saturation.

\begin{figure}[!htbp]
	\centering
	\hspace{-1cm}
	\includegraphics[scale=0.25]{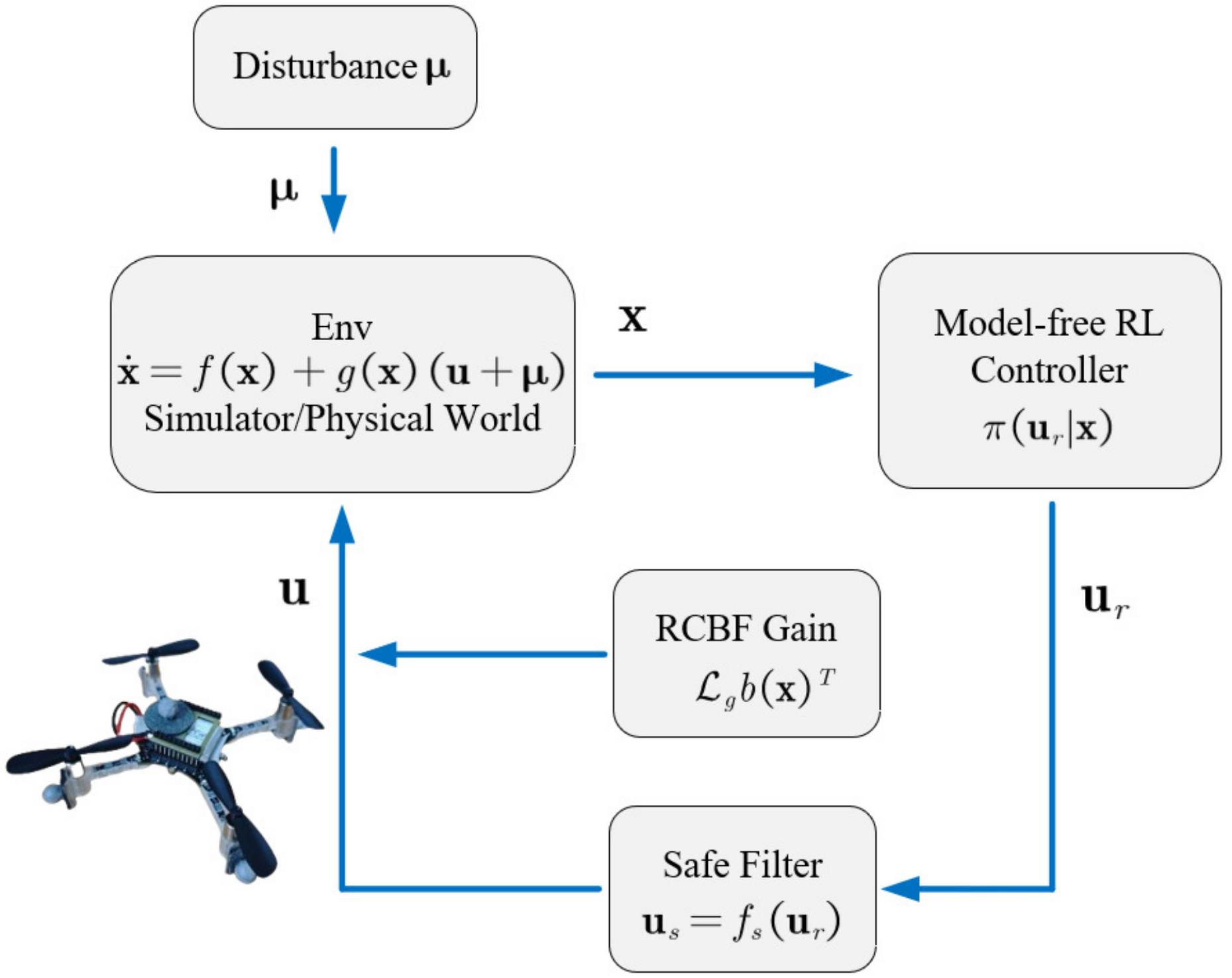}
	\caption{The framework of SRLF.} \label{fig1}
\end{figure}

It should be noted that RL performs excellently on many tasks. However, the lack of safety constraints severely limits the application of RL, let alone real-world deployment. Additionally, though some safety learning methods such as PPO-Lag \cite{yu2022towards} can achieve soft safety, they increase the burden of task learning since policy should consider both rewards and safety costs, which tend training to failed, especially for complex collision-free multicopter trajectory tracking. Therefore, we  propose a SRLF that help RL focus on  multicopter trajectory tracking and filter the RL control action to meet collision-free based on RCBF.

The framework of SRLF is shown in Fig. \ref{fig1}, where $\mathbf{u}_{s}$ is the control from SRLF, $\mathbf{u}_{r}$ is the control that can be obtained by arbitrary model-free RL algorithm, which is presented in Algorithm \ref{alg1}. $f_{s}$ is the filter function and $\mathbf{u}_{s}=f_{s}(\mathbf{u}_{r})=\mathbf{u}_{r}+\mathbf{u}_{f}$, $\mathbf{u}_{f}$ denotes the compensatory control input generated by the filter to ensure safety constraints are satisfied.

The control output of SRLF $\mathbf{u}=\mathbf{u}_{r}+\mathbf{u}_{f}+\mathcal{L}_{g}b(\mathbf{x})$, it can be obtained form \eqref{3.11} that  $\mathcal{L}_{g}b(\mathbf{x})=\frac{\sum_{i=1}^M \left( \exp(-\varrho b_{i}(\mathbf{x})) \mathcal{L}_{g}b_{i}(\mathbf{x})^{T} \right) }{\sum_{i=1}^M \exp(-\varrho b_{i}(\mathbf{x}))}=\frac{1}{\bar{e}}\sum_{i=1}^M \exp(-\varrho b_{i}(\mathbf{x}))\frac{\Delta \mathbf{p}_{i}^{T}}{\|\Delta \mathbf{p}_{i}\|}$. Then, let $\tilde{\mathbf{u}}_{r}=[\underbrace{\mathbf{u}_{r}, \mathbf{u}_{r}, \ldots, \mathbf{u}_{r}}_{M \text{ times}}]$ and $\tilde{\mathbf{u}}_{f}=[\underbrace{\mathbf{u}_{f}, \mathbf{u}_{f}, \ldots, \mathbf{u}_{f}}_{M \text{ times}}]$, \eqref{3.12} can be rewritten as
\begin{align*}
-\tilde{e}^{T}\tilde{\mathbf{u}}_{f}\leq\hat{e}+\gamma b^{3}(\mathbf{x})+\tilde{e}^{T}\tilde{\mathbf{u}}_{r}
\end{align*}

To achieve SRLF,  the optimal filter control $\mathbf{u}_{f}^{*}$ can be solved by the following QP with input saturation and unknown disturbances:
\begin{equation}\label{4.1}
\begin{aligned}
\mathbf{u}_{f}^{*} & =\arg\min_{\mathbf{u}_{f}} \|\mathbf{u}_{f}\| \\
\text{s.t.} \quad & \mathbf{u}_{\min} \leq \mathbf{u}_{r} + \mathbf{u}_{f} + \mathcal{L}_{g}b(\mathbf{x}) \leq \mathbf{u}_{\max}, \\
&-\tilde{e}^{T}\tilde{\mathbf{u}}_{f}\leq\hat{e}+\gamma b^{3}(\mathbf{x})+\tilde{e}^{T}\tilde{\mathbf{u}}_{r}.
\end{aligned}
\end{equation}

\begin{algorithm}[!htbp]
	\caption{Model-free RL Algorithm Training}
\label{alg1}
\begin{algorithmic}[1]
	\REQUIRE Initialized actor and critic network parameters $\theta_{\mathbf{u}_{r}}$, $\theta_{c}$, learning rate $\eta_{\mathbf{u}_{r}}$,  $\eta_{c}$, loss $J_{c}(\theta_{c})$, $J_{\mathbf{u}_{r}}(\theta_{\mathbf{u}_{r}})$, total training steps $\mathbf{M}$, update interval $\mathbf{n}$.
	\ENSURE Unsafe actor network parameters $\theta_{\mathbf{u}_{r}}$.
	\STATE Set current training step $\mathbf{m}=0$
	\WHILE {$\mathbf{m}<\mathbf{M}$}
	\STATE Observe state $\mathbf{x}_{t}$
	\STATE Select and execute action $\mathbf{u}_{r,t}\sim\pi_{\theta_{\mathbf{u}_{r}}}(\cdot|\mathbf{x}_{t})$
	\STATE Observe reward $r_{t}$ and  next state $\mathbf{x}_{t+1}$
	\STATE Store and get tuple $(x_{t},u_{t},r_{t},x_{t+1})$ in Buffer
	\IF{$\mathbf{m} \mod \mathbf{n} == 0$}
	\STATE Update critic network parameters $\theta_{c}\leftarrow \theta_{c}-\eta_{c}\nabla_{\theta_{c}} J_{c}(\theta_{c})$
	\STATE Update actor network parameters $\theta_{\mathbf{u}_{r}}\leftarrow\theta_{\mathbf{u}_{r}}+\eta_{\mathbf{u}_{r}}\nabla_{\theta_{\mathbf{u}_{r}}}J_{\mathbf{u}_{r}}(\theta_{\mathbf{u}_{r}})$
	\ENDIF
	\STATE$\mathbf{m}=\mathbf{m}+1$
	\ENDWHILE
\end{algorithmic}
\end{algorithm}

\begin{theorem}
Assume that the initial tracking error of multicopter and the disturbances are bounded. If the controller is successfully trained using an arbitrary model-free RL algorithm such that $\|\mathbf{u}^{*}-\mathbf{u}\| \rightarrow 0$, the QP problem with forward invariance of RCBF  and input saturation constraints is feasible at any time $t$, then the collision-free tracking with disturbances and input saturation can be guaranteed.
\end{theorem}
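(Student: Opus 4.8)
The plan is to decompose the claim into three parts that can be handled separately: (i) collision avoidance, understood as forward invariance of the safe set under disturbances; (ii) respect of the input-saturation limits; and (iii) boundedness and convergence of the tracking error. Parts (i) and (ii) will follow from the structure of the QP \eqref{4.1} together with Theorem \ref{theorem1} and Lemma \ref{lemma1}, while part (iii) will follow from the training hypothesis $\|\mathbf{u}^{*}-\mathbf{u}\|\rightarrow 0$ and the boundedness of the initial error.

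For the safety part, I would first observe that feasibility of \eqref{4.1} at time $t$ means its optimal solution $\mathbf{u}_{f}^{*}$ satisfies the barrier constraint $-\tilde{e}^{T}\tilde{\mathbf{u}}_{f}\leq\hat{e}+\gamma b^{3}(\mathbf{x})+\tilde{e}^{T}\tilde{\mathbf{u}}_{r}$. Substituting the closed-loop control $\mathbf{u}=\mathbf{u}_{r}+\mathbf{u}_{f}^{*}+\mathcal{L}_{g}b(\mathbf{x})^{T}$ and unwinding the definitions of $\tilde{e}$, $\hat{e}$ and the RCBF gain recovers exactly the forward-invariance inequality \eqref{3.9}. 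By Theorem \ref{theorem1} this certifies that $b(\cdot)$ and each $b_{i}(\cdot)$ are RCBFs on $\mathcal{D}$, and by Lemma \ref{lemma1} the enlarged set $\mathcal{C}_{d}$, and hence $\mathcal{C}$, is forward invariant under the bounded disturbance $\boldsymbol{\mu}$. Since the initial tracking error is bounded, the multicopter can be initialized with $\mathbf{x}(t_{0})\in\mathcal{C}_{d}$, so forward invariance yields $b_{i}(\mathbf{x}(t))+\iota(\|\boldsymbol{\mu}\|_{\infty})\geq 0$ for all $t\geq t_{0}$; by the explicit form \eqref{3.11} this enforces $\|\Delta\mathbf{p}_{i}\|\geq D_{s}$ up to the disturbance-dependent margin $\bar{\iota}$, i.e. collision-free behaviour with respect to every obstacle $i=1,\dots,N$.

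The saturation part is then immediate: the box constraint $\mathbf{u}_{\min}\leq\mathbf{u}_{r}+\mathbf{u}_{f}+\mathcal{L}_{g}b(\mathbf{x})\leq\mathbf{u}_{\max}$ is imposed directly in \eqref{4.1}, so feasibility guarantees that the applied control $\mathbf{u}$ never violates the actuator limits. For tracking, the training assumption $\|\mathbf{u}^{*}-\mathbf{u}\|\rightarrow 0$ means the closed-loop input approaches the optimal collision-aware tracking input $\mathbf{u}^{*}$; combined with the bounded initial error and the minimal-norm objective of \eqref{4.1}, which keeps the filter correction $\mathbf{u}_{f}$ inactive whenever the RL action is already safe, the tracking error remains bounded and collision-free tracking is obtained.

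I expect the main obstacle to be the simultaneous-satisfaction step: one must argue that the barrier constraint and the saturation constraint are jointly satisfiable, which is precisely what the standing feasibility hypothesis of the QP provides, and then transfer the continuous-time invariance conclusion of Lemma \ref{lemma1} to the control actually applied by the filter. A secondary subtlety is verifying that the initial state genuinely lies in $\mathcal{C}_{d}$ rather than merely in $\mathcal{D}$, so that the invariance argument can be initiated; this is exactly where the boundedness assumption on the initial tracking error is used.
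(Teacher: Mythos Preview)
Your proposal is correct and follows essentially the same approach as the paper: invoke Theorem~\ref{theorem1} (and Lemma~\ref{lemma1}) via the barrier constraint in \eqref{4.1} for forward invariance, use the box constraint in \eqref{4.1} for saturation, and appeal to the training hypothesis for tracking. Your version is in fact more careful than the paper's own argument, which is a brief sketch; in particular, your explicit treatment of the initial condition lying in $\mathcal{C}_{d}$ and the role of the minimal-norm objective are elaborations the paper omits.
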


\begin{proof}
Since the control input $\mathbf{u}_{r}$ is successfully trained, it implies that $\mathbf{u}_{r}$ can solve basic tracking problem. The theorem \ref{theorem1} proves that  the control action in the form of \eqref{3.8} can keep the set $\mathcal{C}$ safe and forward invariant, and the constraint $ \mathbf{u}_{\min} \leq \mathbf{u}_{r} + \mathbf{u}_{f} + \sum_{i=1}^{N}\frac{\Delta\mathbf{p}_i}{\|\Delta\mathbf{p}_i\|} \leq \mathbf{u}_{\max}$,  $-\Delta\mathbf{p}_{i}^T\mathbf{u}_{f} \leq h_{1} + h_{2}, i=1,\cdots,N$ in \eqref{4.1} makes sure collision-free and input saturation, therefore the control action $\mathbf{u}$ can guarantee the collision-free tracking of multicopter with input disturbances and saturation. The proof is completed.
\end{proof}

\section{Experiments}
In this section, the multicopter figure-8 tracking experiments are designed in numerical simulation, and a  Crazyflie 2.1 quadcopter is utilized to carry out the figure-8 tracking with disturbances and input saturation in real-world experiment to verify the effectiveness and safety of the SRLF.
In the experiments, we have implemented the SRLF using two distinct approaches:
\begin{enumerate}
	\item \text{Safe Post Filtering:} This method involves training a model-free Reinforcement Learning (RL) agent without considering collision avoidance constraints. Subsequently, we apply a safety filter to the control actions generated by the agent to ensure collision-free operation. This approach is characterized by its relative ease of training and higher success rate in achieving convergence.
	\item \text{Safe Learning Filtering:} In this approach, we integrate the safety filter into both the training and deployment phases. While this method ensures safe training procedures, it typically requires a longer training duration. The advantage lies in its comprehensive safety considerations throughout the entire learning and execution process.
\end{enumerate}
The first method prioritizes training efficiency, while the second emphasizes safety throughout the entire learning process. Both approaches aim to achieve collision-free tracking, but they differ in their integration of safety constraints within the SRLF. The choice between these methods depends on the specific requirements of the application, balancing the trade-off between training efficiency and comprehensive safety assurance. The following experiments are conducted on a computer with the i7-13700K CPU and RTX 4060ti GPU.

\begin{figure}[!h]
\centering
\includegraphics[width=0.5\textwidth]{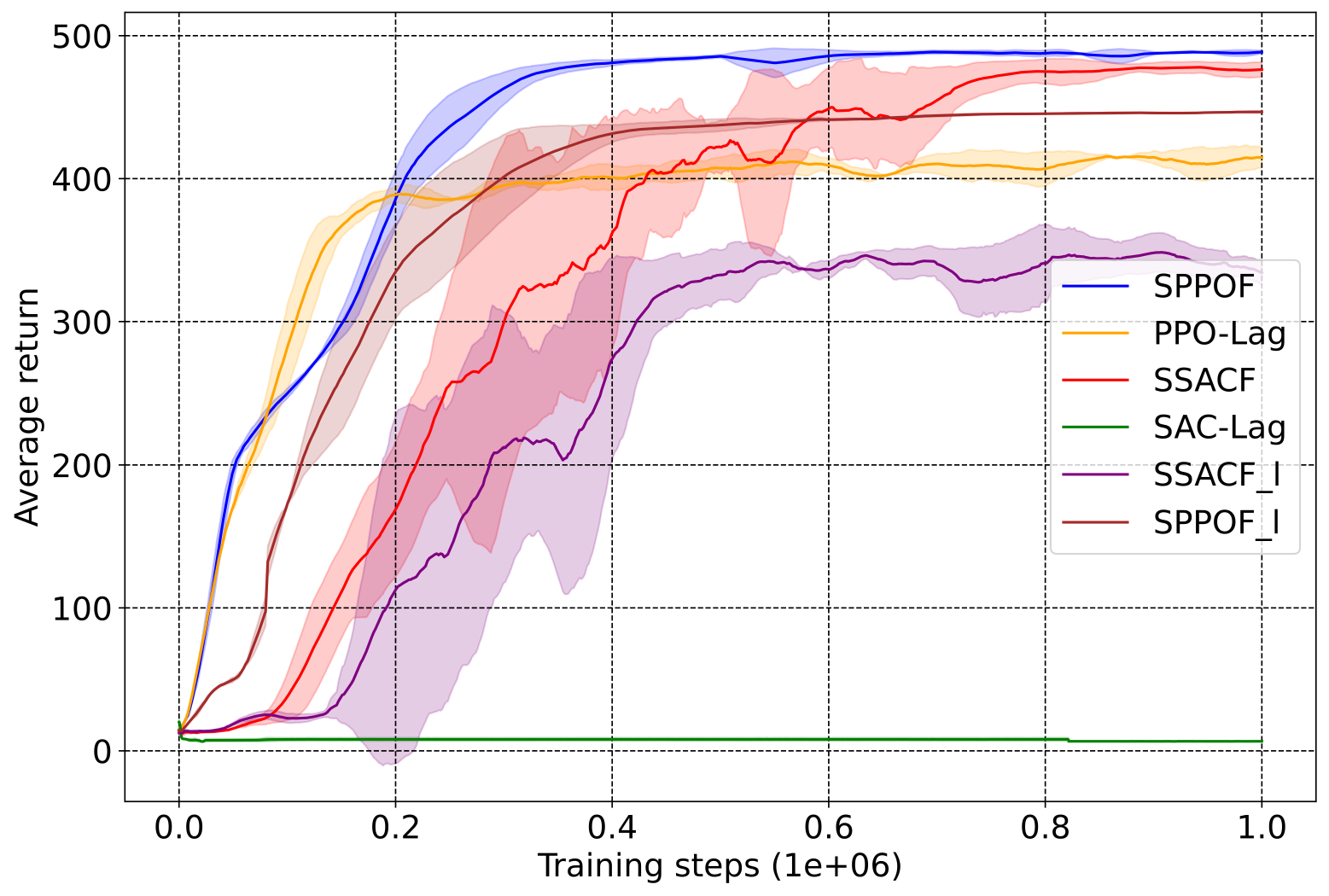}
\caption{The average return training curves of SPPOF, SSACF, PPO-Lag, SAC-Lag, SPPOF\_l and SSACF\_l by running 5 times with different seed. The lines and shaded areas represent the average return and the 95\% confidence interval, respectively.} \label{fig2}
\end{figure}

\begin{figure}[!h]
	\centering
	\includegraphics[width=0.55\textwidth]{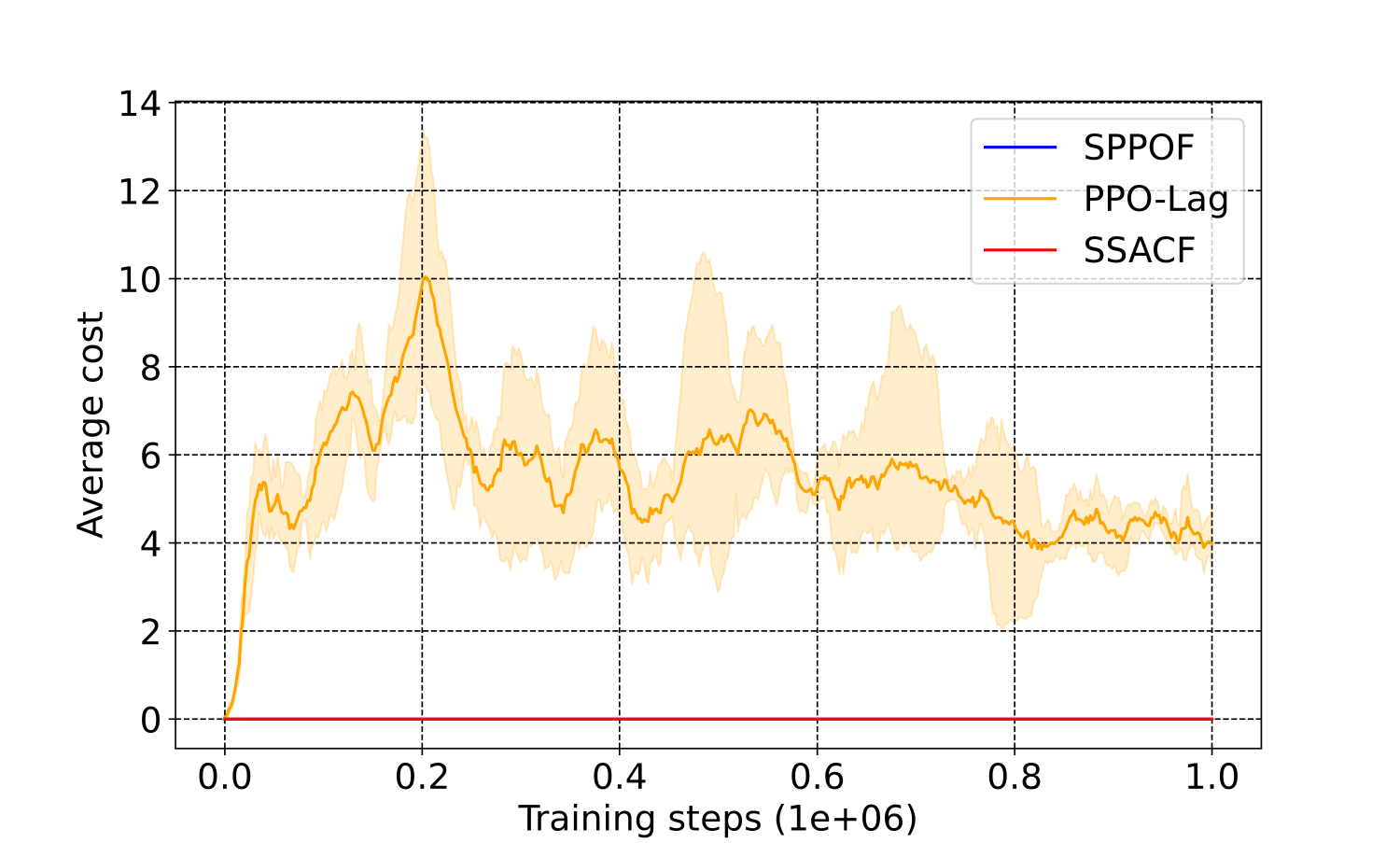}
	\caption{The average cost training curves of SPPOF\_l , PPO-Lag and SSACF\_l by running 5 times with different seed. The cost value is the number of collisions. The lines and shaded areas represent the average cost and the 95\% confidence interval, respectively.} \label{fig3}
\end{figure}

\begin{figure}[!h]
\centering
\includegraphics[scale=0.45,trim={6.5cm 3cm 0 4cm},clip]{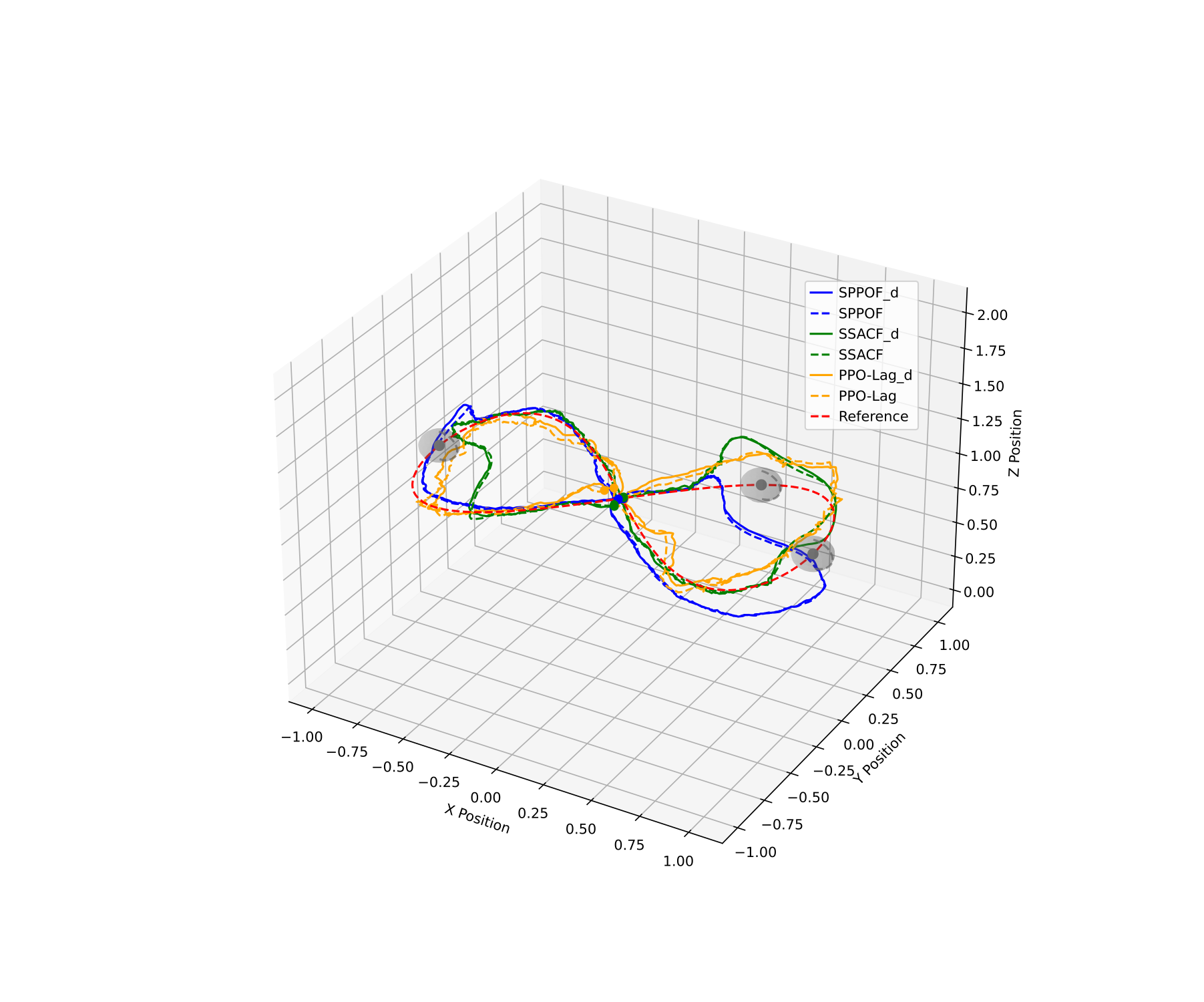}
\caption{Collision-free figure-8 tracking of SPPOF, SSACF and PPO-Lag, the solid line represents the trajectory without disturbances, the dashed line represents the trajectory under disturbances, and the gray spheres are obstacles.} \label{fig4}
\end{figure}

\begin{figure}[!h]
	\centering
	\includegraphics[scale=0.45,trim={6.5cm 3cm 0 4cm},clip]{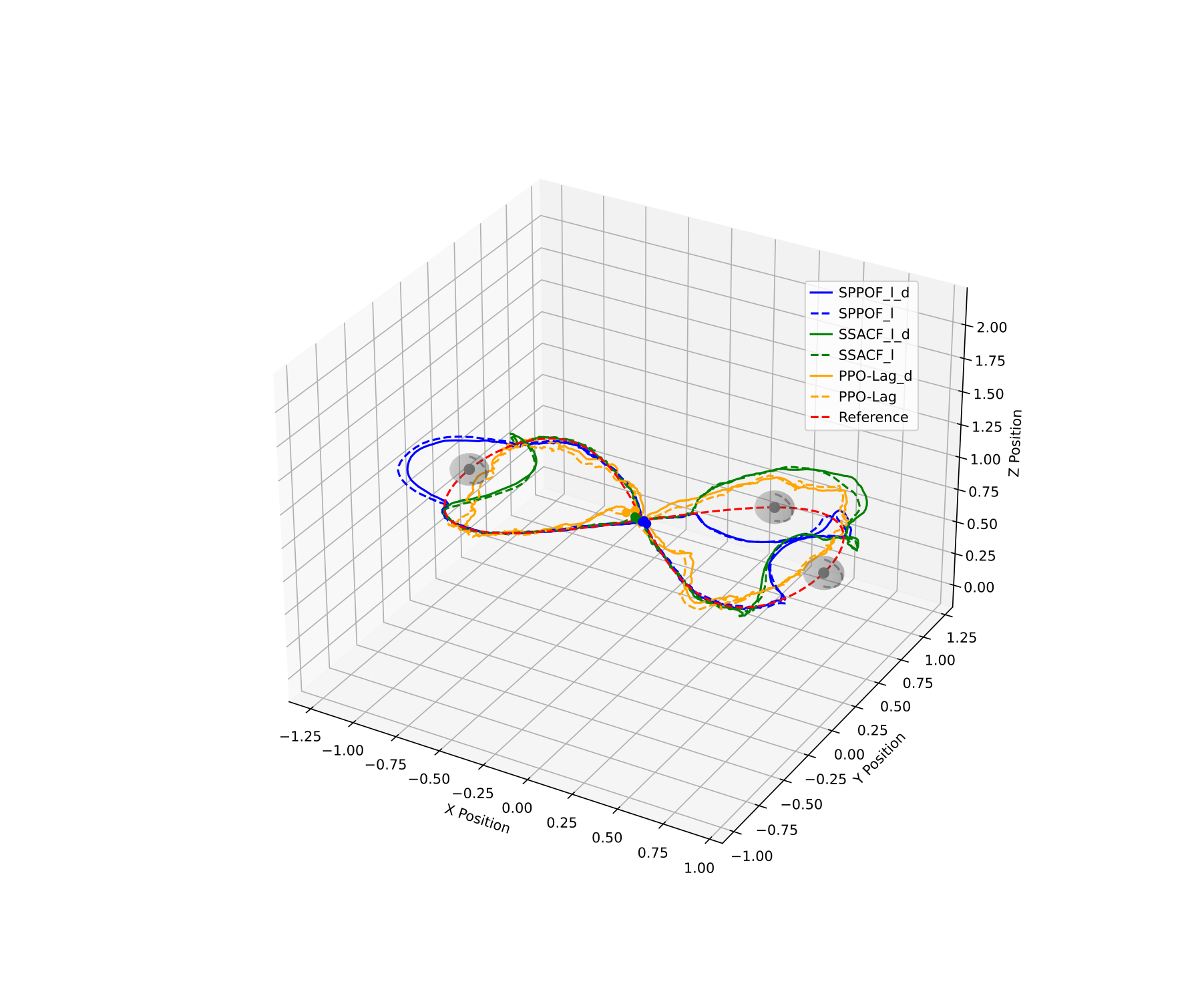}
	\caption{Collision-free figure-8 tracking of SPPOF\_l, SSACF\_l and PPO-Lag, the solid line represents the trajectory without disturbances, the dashed line represents the trajectory under disturbances, and the gray spheres are obstacles.} \label{fig5}
\end{figure}

\begin{figure}[!h]
	\centering
	\includegraphics[width=0.5\textwidth]{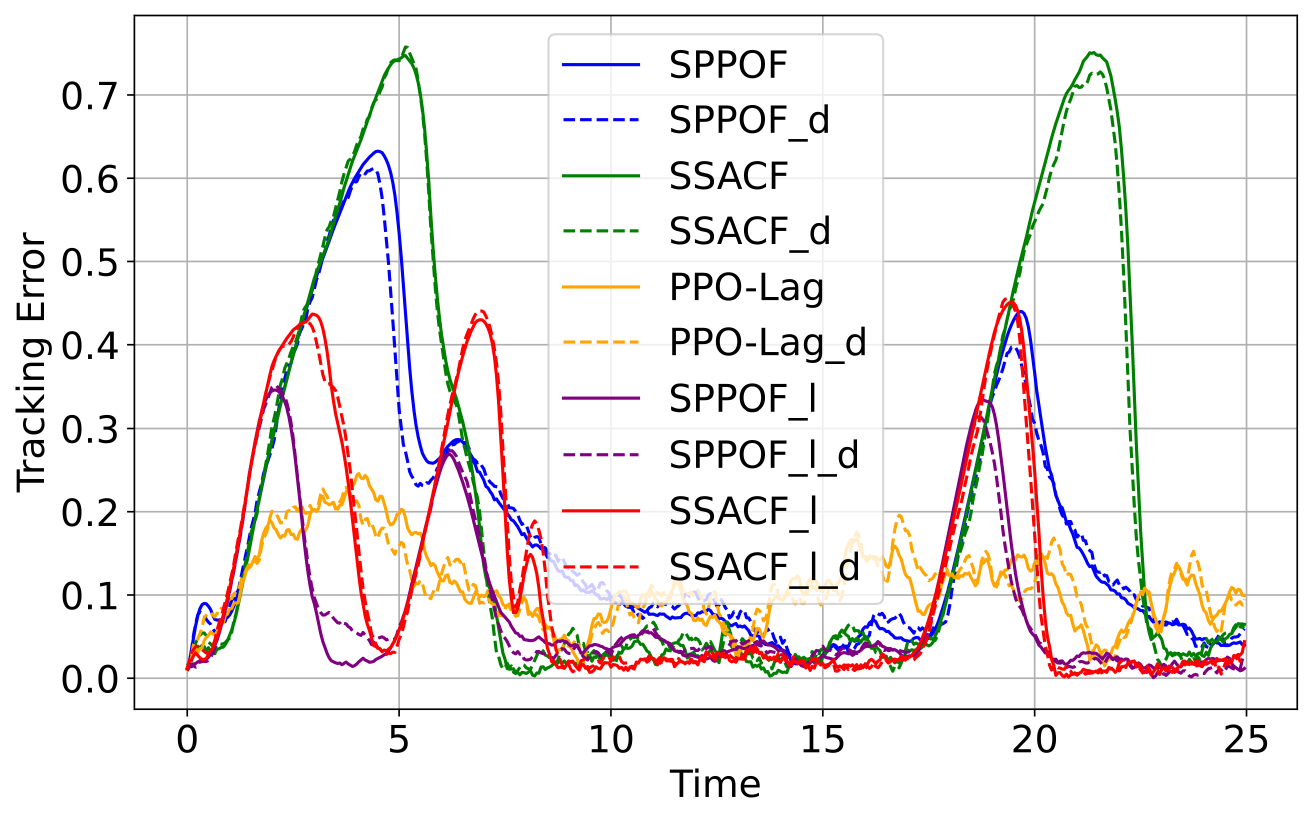}
	\caption{The tracking error of SPPOF, SSACF, PPO-Lag, SPPOF\_l and SSACF\_l. The peak indicates that Crazyflie is avoiding obstacle} \label{fig8}
\end{figure}

\begin{figure}[!h]
	\centering
	\includegraphics[width=0.5\textwidth]{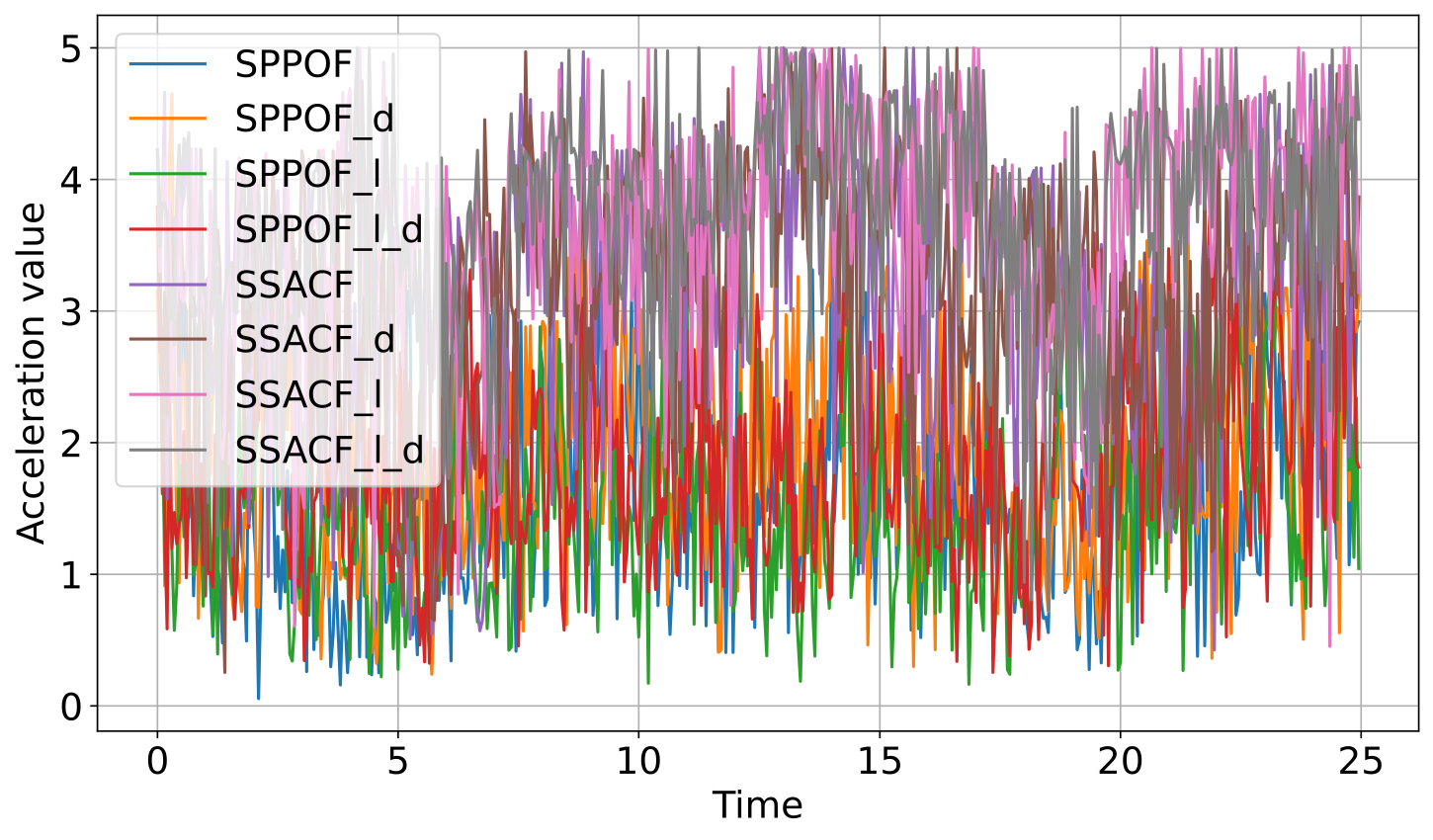}
	\caption{The acceleration $\mathbf{v}_{c}$ of SPPOF, SSACF, PPO-Lag, SPPOF\_l and SSACF\_l with input saturation $\|\mathbf{v}_{c}\|\leq 5\text{m}/\text{s}^{2}$.} \label{fig13}
\end{figure}

\begin{figure}[!h]
\centering
\includegraphics[width=0.5\textwidth]{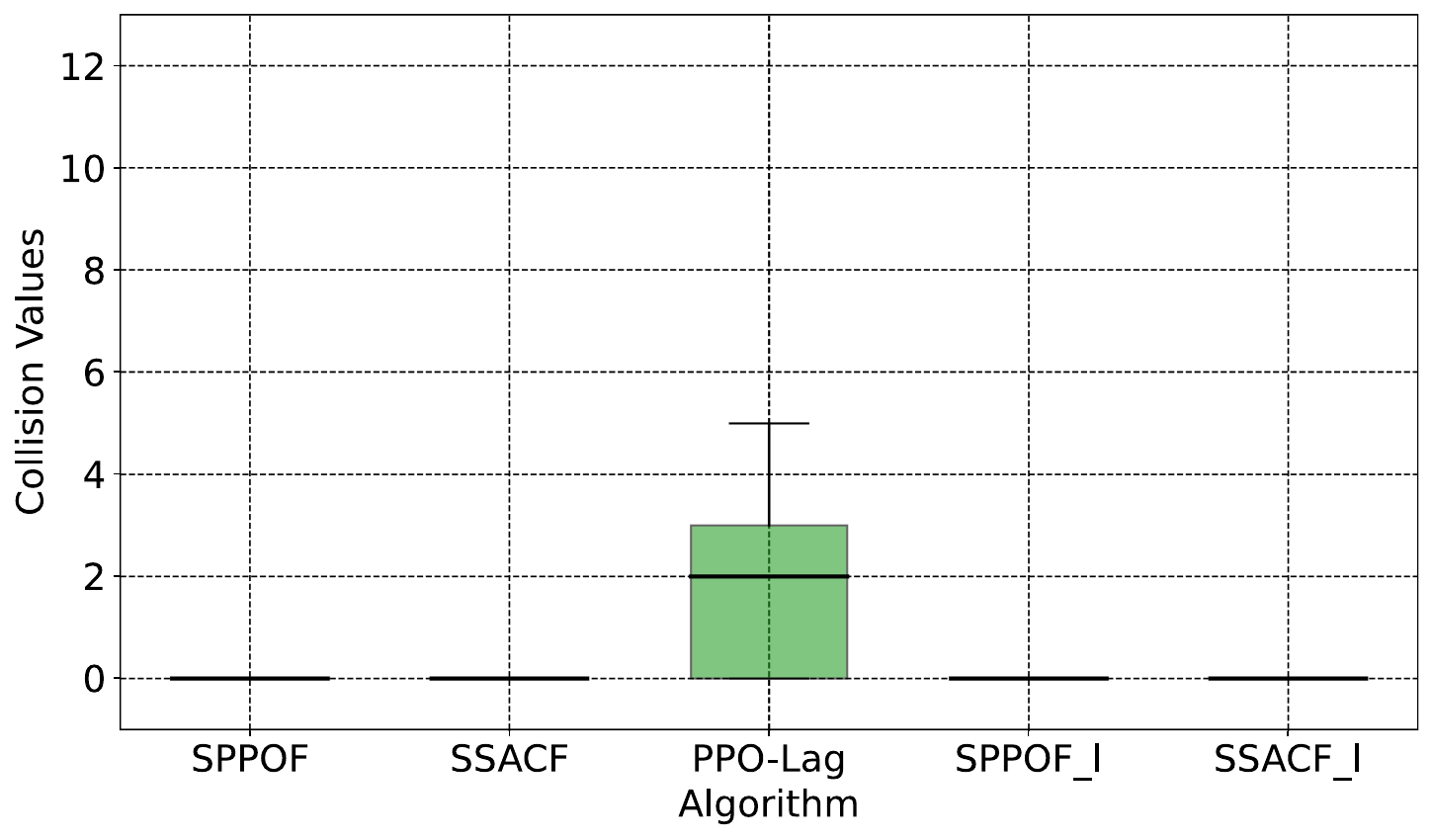}
\caption{The collision count of SPPOF, SSACF, PPO-Lag, SPPOF\_l and SSACF\_l. The red dot is outliers.} \label{fig6}
\end{figure}

\begin{figure}[!h]
	\centering
	\includegraphics[width=0.5\textwidth]{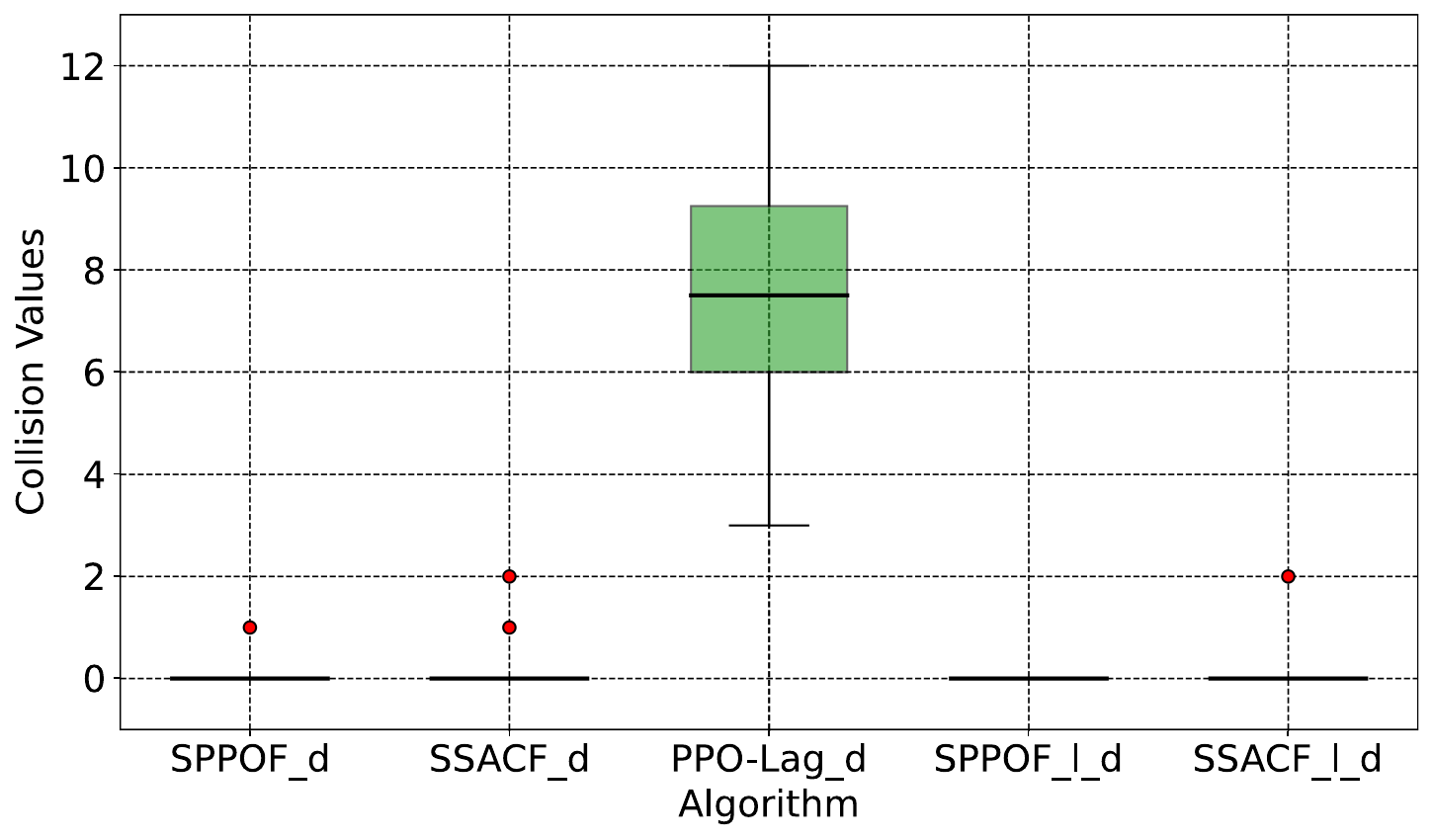}
	\caption{The collision count of SPPOF, SSACF, PPO-Lag, SPPOF\_l and SSACF\_l. $\_\boldsymbol{\mu}$ means under disturbances, the red dot is outliers.} \label{fig7}
\end{figure}

\begin{figure}[!h]
	\centering
	\includegraphics[scale=0.22,trim={2cm 0 1cm 0},clip]{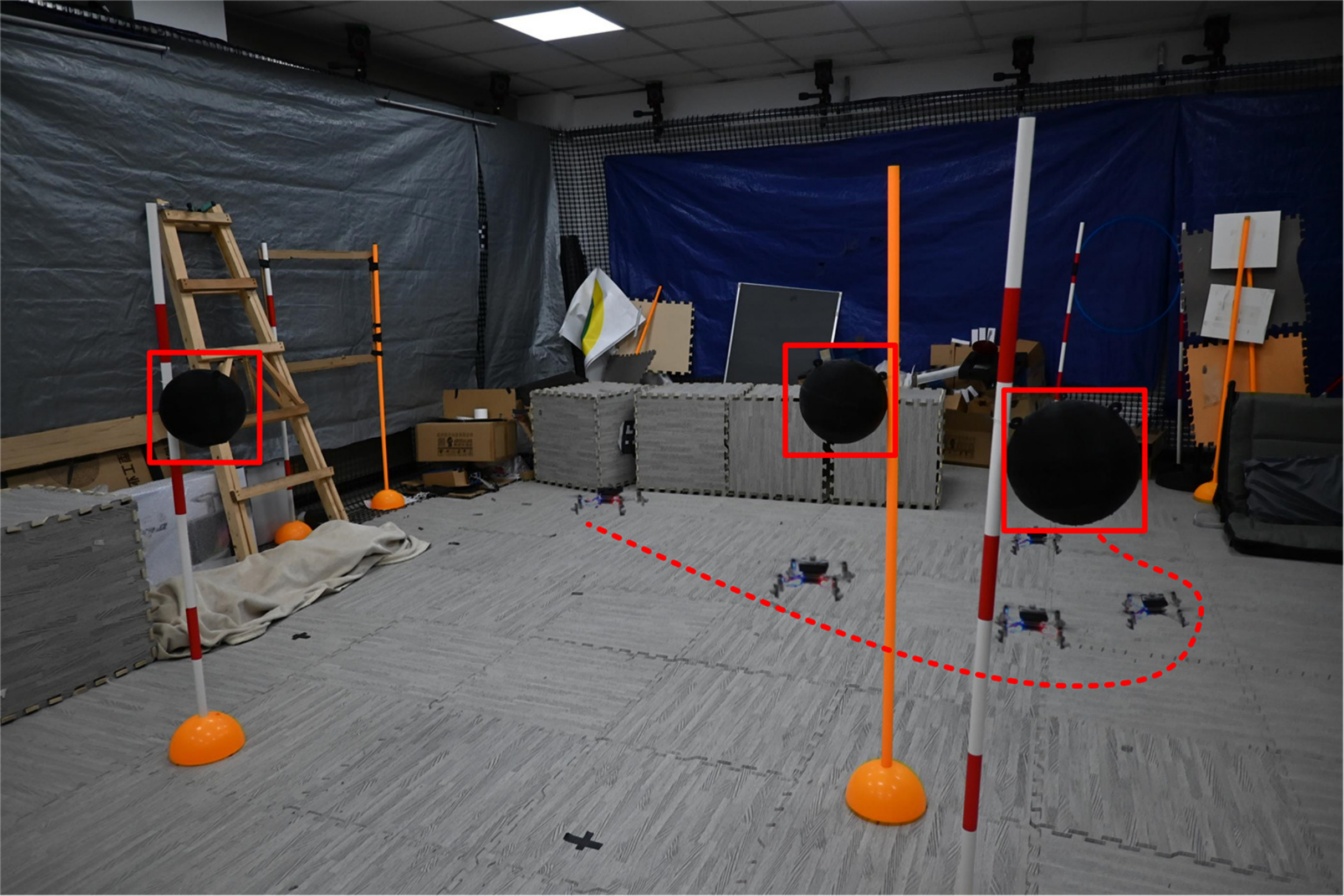}
	\caption{The figure-8 trajectory tracking of SPPOF, the red dashed line is the trajectory of Crazyflie 2.1, the black spheres in red square are obstacle.} \label{fig9}
\end{figure}

\begin{figure}[!h]
	\centering
	\includegraphics[scale=0.22,trim={2cm 0 1cm 0},clip]{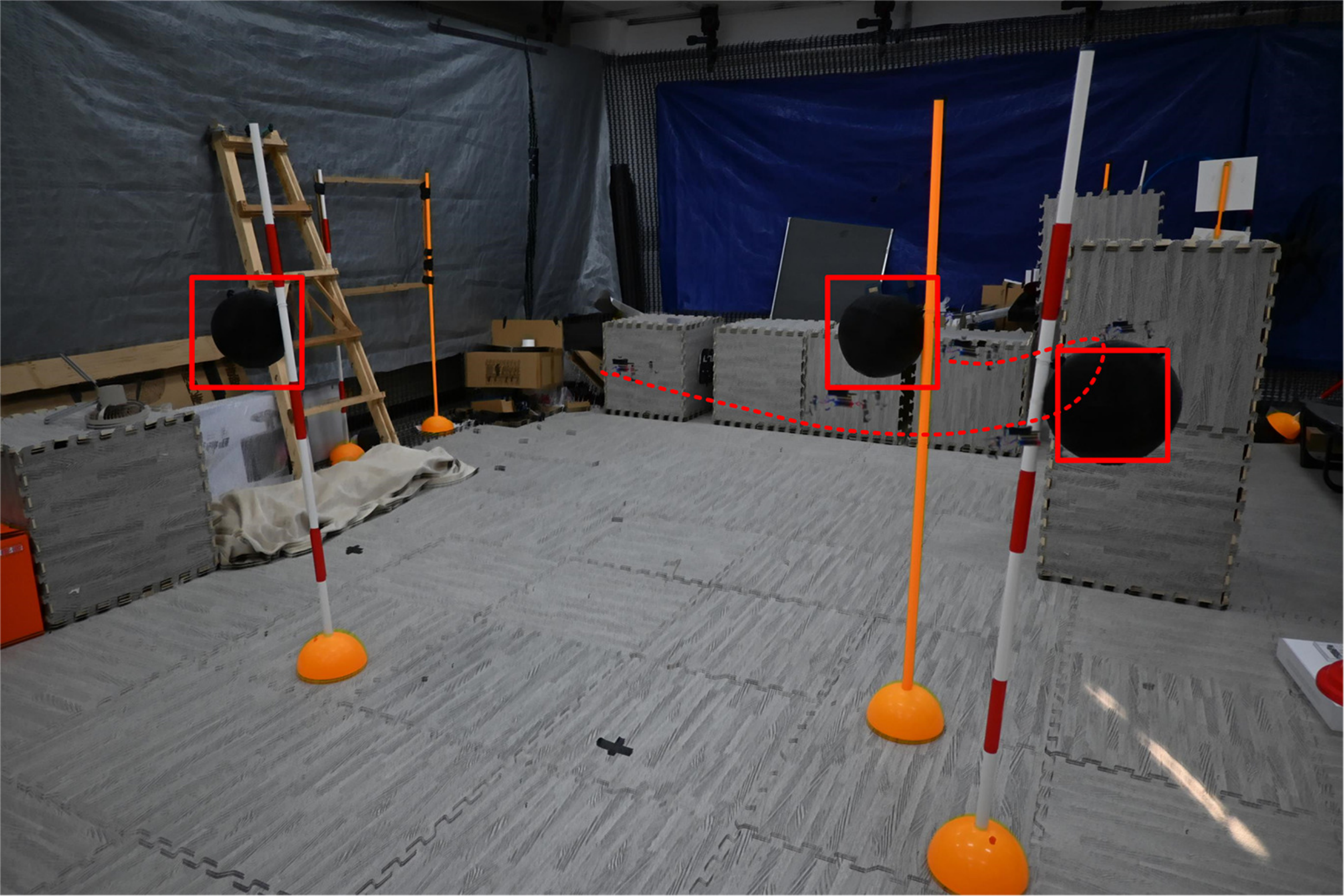}
	\caption{The figure-8 tracking of SPPOF\_l, the red dashed line is the trajectory of Crazyflie 2.1, the black spheres in red square are obstacle.} \label{fig10}
\end{figure}

\begin{figure}[!h]
	\centering
	\includegraphics[scale=0.22,trim={2cm 0 1cm 0},clip]{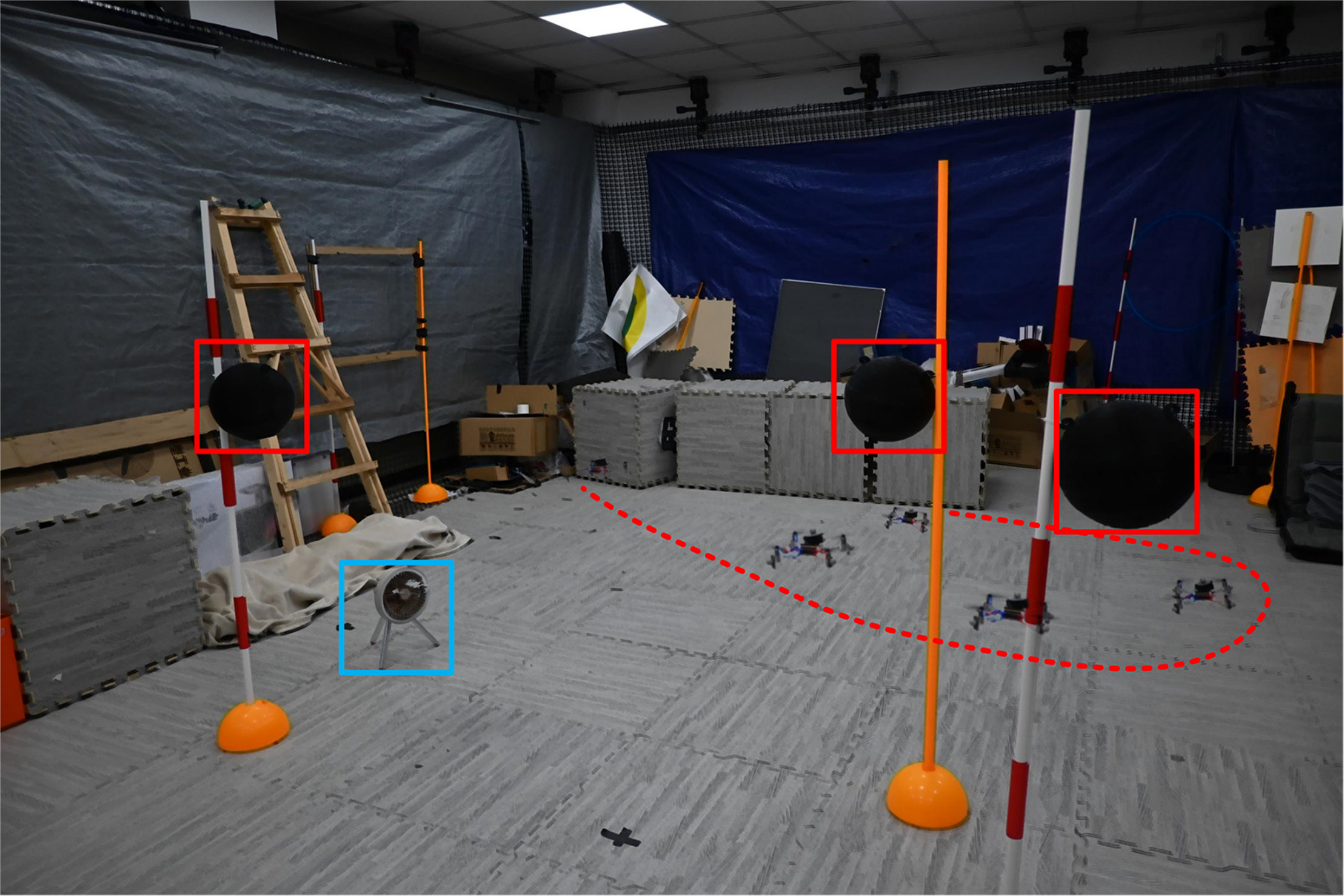}
	\caption{The figure-8 tracking of SPPOF, the red dashed line is the trajectory of Crazyflie under wind disturbances,  the blue square is a fan to generate wind, the black spheres in red square are obstacle.} \label{fig11}
\end{figure}

\begin{figure}[!h]
	\centering
	\includegraphics[scale=0.22,trim={2cm 0 1cm 0},clip]{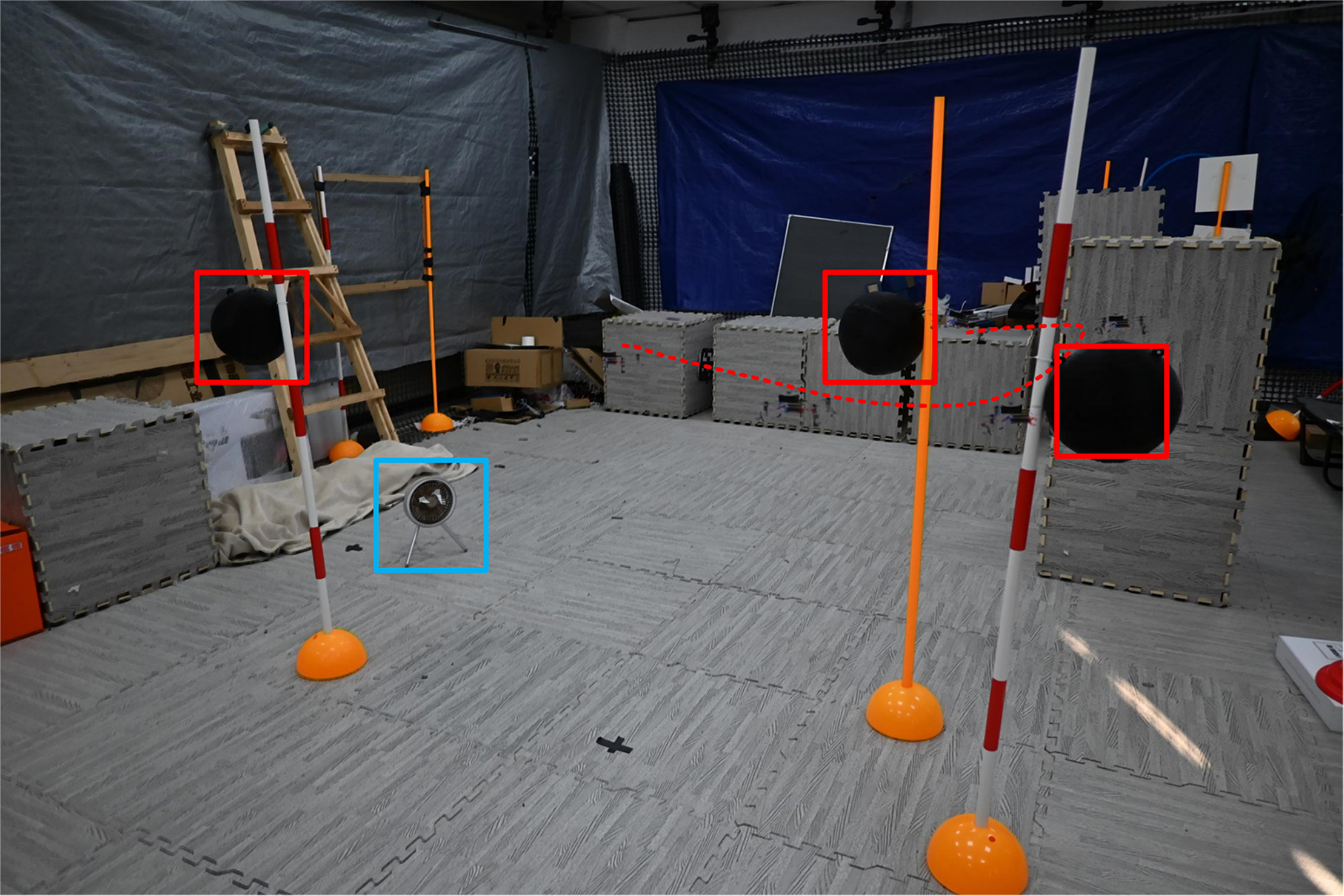}
	\caption{The figure-8 tracking of SPPOF\_l, the red dashed line is the trajectory of Crazyflie under wind disturbances,  the blue square is a fan to generate wind, the black spheres in red square are obstacle.} \label{fig12}
\end{figure}

\subsection{Tracking of a Figure-8 Trajectory}
In the model-free RL setting, the RL control action is required to control a multicopter to track a figure-8 trajectory without any obstacles and disturbances. After training, the safety filter is utilized to achieve collision avoidance with disturbances and input-saturation. The RL episode length is $500$ steps and time interval between each step is $0.05\text{s}$.
The observation is a  14-dimensional vector, which is composed of the current position and velocity  $\mathbf{x}=[\mathbf{p}^{T}, \mathbf{v}^{T}]$, the yaw angle $a_{\psi}$, the reference position $\mathbf{x}_{r}=[\mathbf{p}_{r}^{T}, \mathbf{v}_{r}^{T}]$, the reference yaw angle $a_{\psi,r}$. The RL control action is designed as a 4-dimensional vector $[\mathbf{v}_{c}^{T}, w_{c}^{T}]$ with $\|\mathbf{v}_{c}\|\leq5\text{m}/\text{s}^{2}$ and  $\|w_{c}\|\leq\frac{\pi}{3}\text{rad}/s$, it should be noted that this control action is complete according to \cite{mellinger2011minimum}. The reference trajectory is designed as $\mathbf{p}_{r}=[\sin(\frac{2\pi}{25}t)\text{m}, \frac{1}{2}\sin(\frac{4\pi}{25}t)\text{m}, 1\text{m}]$,  and $\dot{\mathbf{p}}_{r}$, $\dot{\mathbf{v}}_{r}$ can be derived from $\mathbf{p}_{r}$.  $\dot{a}_{\psi,r}=\text{atan2}\left(\frac{2\pi}{50} \cos\left(\frac{4\pi}{25} t\right), \frac{2\pi}{25} \cos\left(\frac{2\pi}{25} t\right)\right)$ is facing the direction of movement, $\text{atan2}$ denotes the inverse tangent function. Three sphere obstacles are placed and their specific positions of center are $\mathbf{p}_{1}=[1\text{m}, 0\text{m}, 1\text{m}]^{T}$, $\mathbf{p}_{2}=[0.5\text{m}, \frac{\sqrt{3}}{4}\text{m}, 1\text{m}]^{T}$ and $\mathbf{p}_{3}=[-1\text{m}, 0\text{m}, 1\text{m}]^{T}$ with radius $0.1\text{m}$. The safety distance $D_{s}$ is $0.15\text{m}$.   The disturbances $\boldsymbol{\mu}$ follows a uniform distribution $\text{Unif}[-1,1]$ in the simulation.

The reward function is constructed as
\begin{align*}
	r = \exp(-1.8(\|\mathbf{p}-\mathbf{p}_{r}\|^{2}+\|a_{\psi}-a_{\psi,r}\|^{2})),
\end{align*}
The nonlinear function $\exp(\cdot)$  provides a nonlinear mapping that better captures the relationship between tracking error and reward, such that small errors get relatively large rewards, and large errors get relatively small rewards. Additionally, it can prevent the reward from approaching zero when the error is large, maintaining numerical stability.

We employ on-policy PPO and off-policy SAC from the Stable Baselines3 as model-free RL algorithms, which are among the state-of-art algorithms currently. By introducing PPO and SAC in SRLF with two training methods, safe post filtering and safe learning filter, they are called SPPOF, SSACF and SPPOF$\_$l, SSACF$\_$l. The RL control action, RL critic networks are both modeled as 2-layer perceptrons with 128 hidden units. The activation function used in each unit is ReLU, and the final outputs of all networks are linear. The control output of filter $\mathbf{u}_{s}$ can be obtained by solving the QP with optimization tool CVXOPT with RCBF gain with disturbances and input saturation

Fig. \ref{fig2} shows the average return training curves for the algorithms, where PPO-Lag \cite{zhang2022conservative} and SAC-Lag \cite{ha2020learning} are safety learning methods. It can be obtained form Fig. \ref{fig2} that the training of SAC-Lag  failed since safety learning methods have to achieve tracking and collision avoidance at the same time, which increases the training burden and can even lead to training failures, as seen with SAC-Lag. In Fig. \ref{fig2}, the average return of PPO-Lag is less than SPPOF and SSACF because SPPOF and SSACF needn't consider collision avoidance. While considering  collision avoidance in training in Fig. \ref{fig3}, the  SPPOF\_l and SSACF\_l keep zero cost which implies no collision in training, and PPO-Lag  have multiple collisions during training.

We compared the figure-8 tracking trajectory of two safety filter methods with safety learning method PPO-Lag, which are shown in Fig. \ref{fig4} and Fig. \ref{fig5}. It can be obtained that safe post filtering and safe learning filtering both achieve collision-free tracking from Fig. \ref{fig4} and Fig. \ref{fig5}. The trajectory of the safe learning filtering method is closer to reference figure-8 trajectory since we use safety filter in training stage. The tracking error approaches zero when the Crazyflie 2.1 is far away from the obstacles, which is illustrated in Fig. \ref{fig8}.
The tracking acceleration $\mathbf{v}_{c}$ is shown in Fig. \ref{fig13}, which satisfied input saturation constraint $\|\mathbf{v}_{c}\|\leq5\text{m}/\text{s}^{2}$.

In order to verify the anti-disturbances capability of RCBF, we test 20 tests in our simulation, the boxplots are in Fig. \ref{fig6} and Fig. \ref{fig7}. The safety learning method PPO-Lag suffers an average of 7 collisions under disturbances, even without disturbances,  PPO-Lag would also take an average of 2 hits, while SRLF algorithms achieve zero collisions with the help of RCBF when outliers are excluded.

\subsection{Real-world Deployment}
In the real-world experiment,  a specific Crazyflie 2.1 is utilized with the parameters in \tabref{tab1}.
\begin{table}[htbp]
	\centering
	\caption{Crazyflie 2.1 Quadcopter Parameters} \label{tab1}
	\begin{tabular}{>{\centering\arraybackslash}m{0.2\textwidth} >{\centering\arraybackslash}m{0.2\textwidth}}
		\toprule
		Parameters &  Values  \\
		\hline
		Mass  & 28 g  \\
		Arm  &  3.97 cm \\
		Propeller radius & 2.31 cm \\
		Max speed & 30 km/h \\
		Thrust2weight & 1.88 \\
		\bottomrule
	\end{tabular}
\end{table}

The position $\mathbf{p}$, velocity $\mathbf{v}$ and yaw angle $\psi$ are obtained based on NOKOV Motion Capture System.

As shown in Fig. \ref{fig9} and Fig. \ref{fig10}, the Crazyflie 2.1 successfully achieves collision-free tracking in 3D space with two filter methods. In Fig. \ref{fig11} and  Fig. \ref{fig12}, we placed a fan to introduce external disturbances and the Crazyflie 2.1 still successfully implement tracking, which verifies the effective of our RCBF.

In summary, according to Fig. \ref{fig2}--\ref{fig12},  the proposed SRLF achieves collision-free trajectory tracking  under disturbances and input saturation in both  simulation and real-world.

\section{Conclusions}
In this paper, a new SRLF framework is proposed to realize multicopter collision-free trajectory tracking under input disturbances and saturation. By integrating a novel RCBF with its analysis techniques, the SRLF transforms potentially unsafe RL control inputs into safe commands. The RCBF gain is designed in control action to guarantee the safe set forward invariance under input disturbance. The safety filter allows RL training to proceed without explicitly considering safety constraints, simplifying the overall training process. The approach leverages QP to incorporate forward invariance of RCBF and input saturation constraints, ensuring rigorous guaranteed collision-free. Both simulation and real-world multicopter experiments demonstrate the effectiveness and excellent performance of SRLF in achieving collision-free tracking. This work contributes to the safer and more reliable deployment of multicopter in complex real-world applications.

\bibliographystyle{ieeetran}   
\bibliography{srlfilter}       

\end{document}